\newcommand{\NN}{\mathbb{N}}
\newcommand{\RR}{\mathbb{R}}
\newcommand{\CC}{\mathbb{C}}
\newcommand{\U}{\mathbb{U}}
\newcommand{\E}{\mathbb{E}}
\newcommand{\innerproduct}[2]{\langle #1, #2 \rangle}
\newcommand{\cN}{\mathcal{N}}
\newcommand{\cU}{\mathcal{U}}
\newcommand{\bfone}{\mathds{1}}
\DeclareMathOperator*{\im}{im}
\DeclareMathOperator*{\Unif}{Unif}
\DeclareMathOperator*{\var}{var}
\DeclareMathOperator*{\cov}{cov}
\DeclareMathOperator*{\dist}{dist}
\DeclareMathOperator*{\diag}{diag}
\DeclareMathOperator*{\Gram}{Gram}
\DeclareMathOperator*{\argmin}{arg\,min}
\theoremstyle{plain}
\newtheorem{theorem}{Theorem}[section]
\newtheorem{proposition}[theorem]{Proposition}
\newtheorem{lemma}[theorem]{Lemma}
\newtheorem{corollary}[theorem]{Corollary}
\theoremstyle{definition}
\newtheorem{assumption}[theorem]{Assumption}
\newtheorem{example}[theorem]{Example}
\theoremstyle{remark}
\newtheorem{remark}[theorem]{Remark}
\renewcommand{\leq}{\leqslant}
\renewcommand{\geq}{\geqslant}
\icmltitlerunning{On the Minimal Degree Bias in Generalization on the Unseen for non-Boolean Functions}
\begin{document}

\twocolumn[
\icmltitle{On the Minimal Degree Bias in Generalization on the Unseen\\ for non-Boolean Functions}

% It is OKAY to include author information, even for blind
% submissions: the style file will automatically remove it for you
% unless you've provided the [accepted] option to the icml2024
% package.

% List of affiliations: The first argument should be a (short)
% identifier you will use later to specify author affiliations
% Academic affiliations should list Department, University, City, Region, Country
% Industry affiliations should list Company, City, Region, Country

% You can specify symbols, otherwise they are numbered in order.
% Ideally, you should not use this facility. Affiliations will be numbered
% in order of appearance and this is the preferred way.
\icmlsetsymbol{equal}{*}

\begin{icmlauthorlist}
\icmlauthor{Denys Pushkin}{epfl}
\icmlauthor{Rapha\"el Berthier}{inria}
\icmlauthor{Emmanuel Abbe}{epfl,apple}
\end{icmlauthorlist}

%\icmlaffiliation{epfl}{Department of XXX, EPFL, Lausanne, Switzerland}
\icmlaffiliation{epfl}{School of Computer and Communication Sciences, EPFL, Lausanne, Switzerland}
\icmlaffiliation{apple}{Apple, Machine Learning Research (MLR), Switzerland}
\icmlaffiliation{inria}{Inria Sorbonne Universit\'e, Paris, France. (While RB is currently affiliated with Inria, the work presented here was partly done while affiliated with the School of Computer and Communication Sciences, EPFL, Lausanne, Switzerland)}

\icmlcorrespondingauthor{Denys Pushkin}{denys.pushkin@epfl.ch}
\icmlcorrespondingauthor{Raphael Berthier}{raphael.berthier@inria.fr}
\icmlcorrespondingauthor{Emmanuel Abbe}{emmanuel.abbe@epfl.ch}

% You may provide any keywords that you
% find helpful for describing your paper; these are used to populate
% the "keywords" metadata in the PDF but will not be shown in the document
\icmlkeywords{Machine Learning, ICML}

\vskip 0.3in
]

% this must go after the closing bracket ] following \twocolumn[ ...

% This command actually creates the footnote in the first column
% listing the affiliations and the copyright notice.
% The command takes one argument, which is text to display at the start of the footnote.
% The \icmlEqualContribution command is standard text for equal contribution.
% Remove it (just {}) if you do not need this facility.

\printAffiliationsAndNotice{}  % leave blank if no need to mention equal contribution
%\printAffiliationsAndNotice{\icmlEqualContribution} % otherwise use the standard text.

% TODO
% Unify Im(f_{RF}) vs \im(f_{RF}) (let us choose the second as default)

% TODO: describe gradient descent with adaptive line search

\begin{abstract}
We investigate the out-of-domain generalization of random feature (RF) models and Transformers. We first prove that in the `generalization on the unseen (GOTU)' setting, where training data is fully seen in some part of the domain but testing is made on another part, and for RF models in the small feature regime, the convergence takes place to interpolators of minimal degree as in the Boolean case \cite{gotu}. We then consider the sparse target regime and explain how this regime relates to the small feature regime, but with a different regularization term that can alter the picture in the non-Boolean case. 
We show two different outcomes for the sparse regime with q-ary data tokens: (1) if the data is embedded with roots of unities, then a min-degree interpolator is learned like in the Boolean case for RF models, (2) if the data is not embedded as such, e.g., simply as integers, then RF models and Transformers may not learn minimal degree interpolators. This shows that the Boolean setting and its roots of unities generalization are special cases where the minimal degree interpolator offers a rare characterization of how learning takes place. For more general integer and real-valued settings, a more nuanced picture remains to be fully characterized.
\end{abstract}

\section{Introduction}
Some of the most challenging tasks for state-of-the-art machine learning models reside in settings where the training data is not representative of the testing data, or more specifically, when there is significant distribution shift. This is in particular central to `reasoning tasks', such as arithmetic and algebra \cite{saxton2019analysing, lewkowycz2022solving-minerva}, visual reasoning such as CLEVR \cite{johnson2017clevr}, physical reasoning such as Phyre \cite{bakhtin2019phyre}, algorithmic data such as CLRS \cite{velivckovic2022clrs} and reasoning on graphs \cite{mahdavi2022better}. In such settings, the combinatorial nature of the data makes comprehensive data sampling challenging, resulting naturally in `holes' in the sampled domain. 

An archetype example of this kind is the `length generalization' setting: no matter how dense we sample discrete inputs, the training data will have inputs of some bounded length, and one can naturally ask for generalization to larger length.
This has motivated \citet{gotu} to consider a special case of out-of-distribution generalization: generalization on the unseen (GOTU). In its most extremal form, the GOTU setting assumes that part of the domain (in a large embedded dimension) is fully observed at training, and the generalization of the model is tested on a new (unseen) part of the domain. Therefore in this setting, there is no `estimation error' since the model always learns perfectly in-distribution, but the question of interest is to understand how well the model  generalizes on new domains depending on the model parametrization and the optimization.

In \cite{gotu}, it is shown that for sparse Boolean functions, i.e., functions defined on $\{+1,-1\}^d$ that depend only on a bounded number of variables, and in such a GOTU setting where training data is available in $\mathcal{U}^c \subseteq \{+1,-1\}^d$ and generalization is tested on $\mathcal{U}$, random feature models learn functions that are interpolators on $\mathcal{U}^c$ with minimal degree-profile: a specific type of polynomials that have minimal degree and also largest mass on the lowest-degree Fourier coefficients. Further, experiments provided in \cite{gotu} show that this `minimal degree bias' also takes place for Transformers. 

In this paper, we study how this picture changes when considering input variables that are not necessarily valued in $\{+1,-1\}^d$. In particular, we consider input variables valued in $\mathcal{X} \subseteq \mathbb{R}^d$, which may be discrete but non-binary or arbitrary real-valued. We study theoretical results for random feature models and provide experiments for Transformers.  

We make this study in two settings: (i) the sparse regime as in \cite{gotu}, where the target function depends effectively on a low input dimension, (ii) the small feature regime, where random features have a weight scale that tends to zero while the input dimension remains fixed. In particular, we explain how these two regimes are related to each other, with (ii) acting as a surrogate of (i) with a regularization term, and investigate both regimes. We further provide experiments for Transformers. 
%showing that the picture is mitigating: it may learn low-degree interpolators or not depending on activations or representations. %This leaves a characterization open for the case of Transformers in the non-Boolean case, besides a special case discussed next. 

\section{Paper Contributions and a Motivating Example}
Consider the arithmetic task where inputs are valued in $\mathcal{X}=\{-q/2,\ldots,-1,1,\ldots,q/2\}$,
for some fixed even $q$, and the target function $f: \mathcal{X}^d \to \mathcal{X} \cdot \mathcal{X}$ is given by  
\begin{align*}
f(x_1,x_2,\ldots,x_d)=x_1 \cdot x_2 \,.
\end{align*}
%Assume that the multiplication is the usual multiplication over integers.
In the GOTU setting, we assume that a set of training examples is given that covers some part of the support and leaves out completely some other part. For instance, consider the case where $x_1$ or $x_2$ is always 1 at training. This is a special case where the model would a priori not have a reason to learn the target function (it does not see effective multiplications). So what will the model learn in that case?

Note that one can model the GOTU constraint in this case as follows:
\begin{align*}
(x_1-1)(x_2-1)=0 \quad \equiv \quad x_1x_2=x_1+x_2-1 \, .
\end{align*}
One possible outcome is that the model could learn a function close to $\widehat{f}(x)=x_1+x_2-1$. This is explained by the following intuition: this function is a correct interpolator of the training data, and it has lowest possible degree. Assuming that such models have a bias towards lower degree polynomials, this function may be learned. This turns out to be a correct intuition in the Boolean case, i.e., when $q=2$. More precisely, this was proved by \citet{gotu} for classic RF models when $d$ diverges, i.e., the `sparse regime', and experiments supporting a similar outcome for Transformers were also obtained. In this paper, as a first contribution, we  show that this outcome also takes place when the target is not sparse (e.g., $d=2$) but the random features have a vanishing variance, which we call the `small feature regime'; this in fact provides a surrogate regime to the sparse regime.  

What happens now if $q>2$? Would such models still learn a function close to $\widehat{f}(x)=x_1+x_2-1$? In this paper, we show the following: 
\begin{enumerate} 
\item (Small-feature RF and any target) Yes, this intuition is still correct for RF models in the small feature regime and any real inputs. See Theorem \ref{theorem: 1} for a formal statement.
\item (Sparse target on arbitrary inputs) No, this intuition is incorrect for classic RF models with general activa- tions on sparse targets, as the model can learned higher degree polynomials, although some activations such as sigmoid appear to still obey the minimal degree rule; similarly, this intuition is not correct in general for Transformers, as we provide experiments with both minimal degree and higher degree interpolators. These experiments are reported in Section \ref{sec:exper}.
\item (RF and sparse target on roots of unities) Yes, this intuition is again correct for classic RF models with general activations (with some regularity condition) and sparse targets if the data is not parametrized as $\mathcal{X}=\{-q/2,\ldots,-1,1,\ldots,q/2\}$ but as $\mathcal{U}=\{e^{2 \pi i k/n} \}_{k=0}^{n-1}$, with the same target $x_1 \cdot x_2$ over the complex numbers (i.e., the target is now the sum of angles of the roots of unities). This is the `natural' extension of the Boolean case (with $q=2$) to larger $q$. Note that this is not due to the fact that the target here becomes the sum of the angles, as the result extends to more generic functions.  We leave it to future work to investigate whether this parametrization could be useful in certain applications; it may also generalize to other groups than roots of unities. We refer to Theorem~\ref{thm:roots-unity} in Section \ref{sec:roots_unity} for the formal statement.      
\end{enumerate}

\section{Background}
\label{sec:setting}

\subsection{Notation}

We denote $\NN$ the set of non-negative integers. If $T \in \NN^d$, we denote $|T| = \Vert T \Vert_1 = T_1 + \dots + T_d$. We define $\NN^d_{\leq p} = \{T \in \NN^d \, \vert \, \vert T \vert \leq p\}$. We also denote $\U_n = \left\{\exp\left(i\frac{2\pi k}{n}\right) \, \vert \, k = 0, \dots, n-1\right\} \subset \CC$ the $n$-roots of unity.  

We denote $\Pi_p(\RR^d)$ the set of multivariate real polynomials on $\RR^d$ with degree less or equal to $p$. Similarly, we denote $\Pi_p(\U_n^d)$ the set of functions on $\U_n^d$ that are the restriction to $\U_n^d$ of a multivariate complex polynomial on $\CC^d$ of degree at most $p$.

If $(V, \Vert . \Vert)$ is a normed vector space, $x\in V$ and $W$ is a subspace of $V$, then $\dist(x, W)$ denotes the distance of $x$ to $W$. We denote $\gamma_d$ the standard Gaussian measure over $\RR^d$.

\subsection{Random Feature Model and Different Regimes}

In the following sections, we will study the random features (RF) model $f_{\textnormal{RF}}(a): \RR^d \rightarrow \RR$, which is defined as
$f_{\textnormal{RF}}(a;x) = \frac{1}{\sqrt{N}} \sum_{i=1}^N a_i \phi_{w_i, b_i}(x)$, $x \in \RR^d$.

%\raphael{Do we really need the $\frac{1}{\sqrt{N}}$ normalization? I don't think it is needed for the theory at least.}

Here, $x\in\RR^d$ is the input variable, $a\in\RR^N$ are the trainable parameters, and $\phi_{w_i, b_i}(x), i\in \{1, \dots, N\}$ are the random features, defined by $\phi_{w,b}(x) = \sigma(\innerproduct{w}{x} + b)$, where the weights $w_i$ and biases $b_i$ are sampled randomly and then fixed during the training. 

Traditionally, the weights $w_i$ and biases $b_i$ of random features model are sampled independently and identically distributed (i.i.d.) according to $w_i \sim \mathcal{N}(0, \frac{1}{d} I_d)$, $b\sim \mathcal{N}(0, \frac{1}{d})$.
\cite{gotu} analysed this setting with additional assumptions that the target function $f$ must be $O_d(1)$-sparse (i.e. it must depend on the finite number of variables) while the dimension $d$ diverges.
We call this setting \textit{the sparse regime}.

Additionally, we consider the setting where $w_i \sim \mathcal{N}(0, \varepsilon I_d)$, $b\sim \mathcal{N}(0, \varepsilon)$.
Here, we assume that the dimension $d$ is fixed, but $\varepsilon \rightarrow 0$.
We call this setting \textit{the small features regime}.

As we will see, the sparse and small features settings are related to each other: we can show that they are equivalent up to some regularizer term (see Section \ref{sec: small features motivation}).
However, we will see that, at least for polynomial activation functions, they have different generalization properties in a GOTU setting. 
In small features regime, the random features model converges to a minimum-degree interpolator (MDI) (under some general assumption on polynomial activation function, see Section \ref{sec:main}), while in the sparse regime the convergence to MDI is rather an exception (see Example \ref{example: 1} and Remark \ref{rmk:example1}).
Finally, we note that the sparse regime requires dimension $d$ to be large and target function to be sparse.
On the other hand, the small features regime does not impose any constraint on dimension~$d$ or target function $f$, but requires non-classical initialization of the weights and biases.
Thus, these two setting have different limitations and areas of applicability.

Define the image of $f_{\textnormal{RF}}$ model as the set of functions it can express:
$\im(f_{\textnormal{RF}}) = \{ f_{\textnormal{RF}}(a), \, a \in \RR^N \}$.

\section{Min-Degree Interpolation in Small Features Regime}
\label{sec:main}

Let $\mathcal{U} \subset \RR^d$ be the unseen domain, so that during the training we only see samples from $\mathcal{U}^c = \RR^d \setminus \mathcal{U}$.
We emphasize that being a proper subset of $\RR^d$ is the only constraint we impose on the unseen domain $\mathcal{U}$.
In particular, we can select $\mathcal{U}$ such that the training domain $\mathcal{U}^c$ is finite or countable with a discrete measure defined on it. Similar to \cite{gotu}, we assume that the model has an access to the distribution on the training domain, which makes sampling error zero and allows to state more clear results.

For the activation function $\sigma$ we assume the following.
\begin{assumption} \label{assumption: 3}
    Assume that $\sigma$ is a polynomial of degree~$p$ whose coefficients are non-zero:
    \begin{gather*}
        \sigma(y) = b_p y^p + \ldots + b_1 y + b_0, \quad \text{where } b_p, \dots, b_0 \neq 0 \, .
    \end{gather*}
\end{assumption}

\begin{theorem}\label{theorem: 1}
    Consider training the random features model $f_{\textnormal{RF}}(a;x)$ in the small features regime (with parameter $\varepsilon$) on the polynomial target function $f$. Assume that we observe the target function on the training set $\mathcal{U}^c$, and that the activation function $\sigma$ satisfies Assumption \ref{assumption: 3}.

    For a sufficiently large number $N$ of random features, the model $f_{\textnormal{RF}}$ can interpolate the target function perfectly on $U^c$.

    Among all parameters $a$ such that $f_{\textnormal{RF}}(a)$ interpolates $f$ on $\cU^c$, denote $a^*$ the parameter of minimum $\ell_2$ norm. Denote by $p_*$ the minimum possible degree for a polynomial interpolator of $f$ on the training set $\mathcal{U}^c$.
    Then with high probability, we have\footnote{Since the space $\Pi_d(\RR^d)$ has finite dimension, the convergence in all norms is equivalent in this space.}:
    %\raphael{Doesn't the proof give almost sure convergence?}
    \begin{gather*}
        \lim_{\varepsilon \to 0} \lim_{N \to \infty} \dist(f_{\textnormal{RF}}(a^*), \Pi_{p_*}) = 0 \, .
    \end{gather*}
\end{theorem}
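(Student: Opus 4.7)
The plan is to proceed in two stages: first, take the infinite-width limit $N\to\infty$ to convert the minimum-$\ell_2$ problem into a minimum-RKHS-norm problem on $\Pi_p(\RR^d)$; second, analyze how that RKHS norm scales with $\varepsilon$ and argue that its minimizer over interpolators is asymptotically of degree at most $p_*$.

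For the first stage, Assumption~\ref{assumption: 3} together with the genericity of the Gaussian $(w_i,b_i)$ ensures that for $N$ large enough the map $T_N: a\mapsto f_{\textnormal{RF}}(a;\cdot)$ is surjective onto $\Pi_p(\RR^d)$ with high probability: any linear functional on $\Pi_p$ vanishing on every $\phi_{w,b}$ must be zero, because the coefficient $c_\alpha(w,b)$ of $x^\alpha$ in $\phi_{w,b}$ is $w^\alpha$ times a polynomial in $b$ with nonzero constant term $b_{|\alpha|}\binom{|\alpha|}{\alpha}$. Writing $T_N$ in the monomial basis $\{x^\alpha\}_{|\alpha|\leq p}$, the min-$\ell_2$ preimage of $g$ has squared norm $\vec g^{\top}(T_N T_N^{\top})^{-1}\vec g$. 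By the law of large numbers, $T_N T_N^{\top}\to C_\varepsilon$, the monomial-basis Gram matrix of the population kernel $K_\varepsilon(x,y)=\E[\phi_{w,b}(x)\phi_{w,b}(y)]$. Hence $\|a^*\|^2\to\vec{g^*}^{\top} C_\varepsilon^{-1}\vec{g^*}=:\|g^*\|^2_{K_\varepsilon}$, and $f_{\textnormal{RF}}(a^*)$ converges to
\[
g^*_\varepsilon := \argmin\bigl\{\|g\|^2_{K_\varepsilon} : g\in\Pi_p,\; g|_{\mathcal{U}^c}=f|_{\mathcal{U}^c}\bigr\}.
\]
It then remains to show $\dist(g^*_\varepsilon,\Pi_{p_*})\to 0$ as $\varepsilon\to 0$.

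For the second stage, writing $w=\sqrt\varepsilon\,\tilde w$, $b=\sqrt\varepsilon\,\tilde b$ and expanding $\sigma$ gives $c_\alpha(w,b)=\varepsilon^{|\alpha|/2}\binom{|\alpha|}{\alpha}b_{|\alpha|}\tilde w^\alpha + O(\varepsilon^{(|\alpha|+1)/2})$. Therefore $C_\varepsilon = D_\varepsilon E_\varepsilon D_\varepsilon$ with $D_\varepsilon=\diag(\varepsilon^{|\alpha|/2})_\alpha$ and $E_\varepsilon\to E_0$ as $\varepsilon\to 0$, where $(E_0)_{\alpha\beta}=b_{|\alpha|}b_{|\beta|}\binom{|\alpha|}{\alpha}\binom{|\beta|}{\beta}\E_{\tilde w\sim\gamma_d}[\tilde w^{\alpha+\beta}]$ is the Gram matrix in $L^2(\gamma_d)$ of the rescaled monomials $\{b_{|\alpha|}\binom{|\alpha|}{\alpha}\tilde w^\alpha\}_{|\alpha|\leq p}$. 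These are linearly independent since all $b_k$ are nonzero by Assumption~\ref{assumption: 3}, so $E_0$ is positive definite; hence $C_\varepsilon^{-1}=D_\varepsilon^{-1}E_\varepsilon^{-1}D_\varepsilon^{-1}$ with $\lambda_{\min}(E_\varepsilon^{-1})\geq c>0$ for small $\varepsilon$, yielding the key lower bound
\[
\|g\|^2_{K_\varepsilon}\geq c\sum_\alpha \varepsilon^{-|\alpha|}\, g_\alpha^2.
\]

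The conclusion follows quickly. Any fixed interpolator $g_\star\in\Pi_{p_*}$ of $f$ on $\mathcal{U}^c$ satisfies $\|g_\star\|^2_{K_\varepsilon}=O(\varepsilon^{-p_*})$ by the matching upper bound $\|g\|_{K_\varepsilon}^2 \leq \lambda_{\max}(E_\varepsilon^{-1})\sum_\alpha\varepsilon^{-|\alpha|}g_\alpha^2$, so by minimality $\|g^*_\varepsilon\|^2_{K_\varepsilon}=O(\varepsilon^{-p_*})$. Combining with the lower bound gives $(g^*_{\varepsilon,\alpha})^2=O(\varepsilon^{|\alpha|-p_*})\to 0$ for every $|\alpha|>p_*$, so the projection of $g^*_\varepsilon$ onto higher-degree monomials vanishes and $\dist(g^*_\varepsilon,\Pi_{p_*})\to 0$. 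The main obstacle is the invertibility of the leading-order Gram matrix $E_0$ uniformly in $\varepsilon$: this is precisely where Assumption~\ref{assumption: 3} is essential, since a single vanishing $b_k$ would collapse the degree-$k$ block of $E_0$ and destroy the clean $\varepsilon$-scaling that drives the argument.
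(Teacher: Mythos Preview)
Your argument is correct and matches the paper's almost exactly: both factor the population Gram matrix as $D_\varepsilon E_\varepsilon D_\varepsilon$ with $D_\varepsilon=\diag(\varepsilon^{|\alpha|/2})$, show $E_\varepsilon\to E_0\succ 0$ (the paper's Lemma~\ref{lemma: 5} and Corollary~\ref{corollary: 2}), and use the resulting two-sided $\varepsilon$-scaling of the quadratic form to force high-degree coefficients of the minimizer to vanish. The only differences are cosmetic---you work in the monomial basis while the paper uses Hermite, but the leading-order rescaled coefficients $b_{|\alpha|}\binom{|\alpha|}{\alpha}\tilde w^\alpha$ and hence the limiting matrix $E_0$ coincide---and you elide the convergence $f_{\textnormal{RF}}(a^*)\to g^*_\varepsilon$ of minimizers as $N\to\infty$, which the paper isolates as a separate stability lemma (Lemma~\ref{lemma: 8}).
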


\begin{remark}
Note that the model will converge to the minimum $\ell^2$ norm solution $a^*$ if trained with (stochastic) gradient descent starting from $a=0$ initialization under the mean squared error loss (in an overparametrized setting).
Thus the theorem describes the bias of gradient descent methods.    
\end{remark}
\begin{remark}
    If the target function $f$ is not a polynomial, we can still apply Theorem \ref{theorem: 1} to describe where the random features model converges.
    Let $\lambda$ be the distribution on the training set $\mathcal{U}^c$ and assume that the mean square error is used to train the model.
    Denote by $\Tilde{f}$ the projection of $f$ in $L^2(\cU^c, \lambda)$ on the space $\Pi_p(\RR^d)$ of polynomials of degree at most $p$:
    \begin{gather*}
        \Tilde{f} = \text{proj}_{\Pi_p(\RR^d)}(f) = \text{argmin}_{h\in\Pi_p(\RR^d)} \|f-h\|_{L^2(\mathcal{U}^c, \lambda)}
    \end{gather*}
    Then the loss function can be decomposed as
    \begin{align*}
        \mathcal{L}(a) 
        & = \mathbb{E}_{x\sim\lambda} \left[(f_{\textnormal{RF}}(a;x) - f(x))^2\right] \\
        & = \|f_{\textnormal{RF}}(a) - f\|_{L^2(\mathcal{U}^c, \lambda)}^2 \\
        & = \|f_{\textnormal{RF}}(a) - \Tilde{f} + \Tilde{f} - f\|_{L^2(\mathcal{U}^c, \lambda)}^2 \\
        & = \|f_{\textnormal{RF}}(a) - \Tilde{f}\|_{L^2(\mathcal{U}^c, \lambda)}^2 + \|\Tilde{f} - f\|_{L^2(\mathcal{U}^c, \lambda)}^2 \, .
    \end{align*}
    The second term in the last expression is independent of~$a$.
    Thus, the training trajectory would be the same as if we trained the model on the target function $\Tilde{f} \in \Pi_p(\RR^d)$, and we may predict where the random features model converges by applying Theorem \ref{theorem: 1} to the target function $\Tilde{f}$.
\end{remark}
\begin{remark}
    Our theorem is not specific to Gaussian parameters of the random features. The proof also works for any weights and biases of the form $w_i = \varepsilon^{1/2} \Bar{w}_i$ and $b_i = \varepsilon^{1/2} \Bar{b}_i$, with $\Bar{w}_i \sim \mu$ and $\Bar{b}_i \sim \nu$ and $\mu,\nu$ are any distributions with all moments finite. 
\end{remark}

We refer to Appendix \ref{sec:lemmas} for the proof of the theorem. The proof can be decomposed into two parts. First, we show that for fixed $\varepsilon$, as $N\rightarrow \infty$ the random features model, denoted as $g$, converges to the minimizer of the quadratic form $\hat{g}^T \Phi^{-1} \hat{g}$, denoted as $g_{\varepsilon}$, where $\Phi$ is a feature kernel matrix, and $\hat{g}$ is the vector of Hermite coefficients of $g$. The proof of this part follows the scheme of Theorem 3.8 from \cite{gotu}. Second, we analyze how this minimizer $g_{\varepsilon}$ behaves in the limit of $\varepsilon\rightarrow0$ and prove that $\dist(g_{\varepsilon}, \Pi_{p_*}) \rightarrow 0$. This part of our proof is original. In the Boolean case of \cite{gotu}, the matrix $\Phi$ was diagonal. Hence it was enough to estimate its diagonal entries, which directly leads to the approximation of its inverse. However, in general case matrix $\Phi$ is non-diagonal. Thus, we estimate all entries of the matrix $\Phi$ and derive the suitable upper and lower bounds on the quadratic form $\hat{g}^T \Phi^{-1} \hat{g}$ (Lemma \ref{lemma: 5} and Corollary \ref{corollary: 2} in Appendix \ref{sec:lemmas}). This is the part where most of the technical difficulty and conceptual novelty lies. This is also where the big picture changes with the min-degree bias of \cite{gotu} breaking, if we do not use the small features regime (see Example \ref{example: 1} for the demonstration of min-degree bias breaking).

\section{Motivations for the Small Features Setting}
\label{sec: small features motivation}

In this section, we derive equivalences between the setting with fixed dimension and small features, and the setting with diverging dimension and $O(1)$ features. 

Let $k$ denote a fixed dimension and $d \gg 1$ denote a large dimension. We set ourselves in the multi-index model where we seek to learn a function $f: \RR^d \to \RR$ of the form 
\begin{equation*}
    f(x) = \varphi(U^\top x) \, ,
\end{equation*}
where $U \in \RR^{d \times k}$, $U^\top U = I_k$ and $\varphi : \RR^k \to \RR$. 

We consider the approximation of $f$ with random features:
\begin{equation*}
    {f}_{\textnormal{RF}}(a;x) = \sum_{i=1}^N a_i \sigma(\langle w_i, x \rangle + c_i) \, , 
\end{equation*}
where $w_i \sim \cN(0, \frac{1}{d} I_d)$ and $c_i \in \RR$. (The reasoning actually works for other random features, this is simply to set an order of magnitude for the $w_i$.) 

We define the loss function in the approximation 
\begin{equation*}
    \mathcal{L}(a) = \frac{1}{2} \E_x \left[\left(f(x) - {f}_{\textnormal{RF}}(a;x)\right)^2\right] \, .
\end{equation*}
Denote $P_{\parallel}$ the orthogonal projection onto $\im(U)$ and $P_\perp$ the orthogonal projection onto $\ker U^\top = (\ker U)^\perp$. If $q \in \RR^d$, we denote $q_\parallel = P_\parallel q$ and $q_\perp = P_\perp q$. We make the following assumption on the input distribution of $x$. 

\begin{assumption}
\label{ass:decomposition}
    We assume that $x_\parallel$ and $x_\perp$ are independent.
\end{assumption}

\begin{example}
This assumption holds in many cases of interest. We show two examples.
\begin{enumerate}
    \item If $x \sim \cN(0, I_d)$, then the assumptions holds. Indeed, then $x_\parallel \sim \cN(0, P_\parallel)$ and $x_\perp \sim \cN(0, P_\perp)$ are independent. 
    \item If $x \sim \Unif(\{-1,1\}^d)$ and the columns of $U$ are a subset of the canonical basis. (This means that the multi-index model is sparse, meaning that it only depends on a subset of the coordinates.) In this case, $x_\parallel$ and $x_\perp$ are independent with uniform distribution on hypercubes of respective dimension $k$ and $d-k$.
\end{enumerate}
\end{example}

Under Assumption \ref{ass:decomposition}, we compute
\begin{align}
    \mathcal{L}(a) &= \frac{1}{2} \E_x \left[\left(f(x) - {f}_{\textnormal{RF}}(a;x)\right)^2\right] \label{eq:risk} \\
    &= \frac{1}{2} \E_{x_\parallel}\E_{x_\perp}  \left[\left(f(x) - {f}_{\textnormal{RF}}(a;x)\right)^2\right] \, . \label{eq:risk-bis}
\end{align}
As $U^\top x$ is independent of $x_\perp$, we have
\begin{align*}
    &\E_{x_\perp}  \left[\left(f(x) - {f}_{\textnormal{RF}}(a;x)\right)^2\right] \\
    &= \E_{x_\perp}  \left[\left(\varphi(U^\top x) - {f}_{\textnormal{RF}}(a;x) \right)^2\right] \\
    &= \left(\varphi(U^\top x) - \E_{x_\perp} {f}_{\textnormal{RF}}(a;x) \right)^2 + {\var}_{x_\perp}  {f}_{\textnormal{RF}}(a;x) = \\
    &\left(\varphi(U^\top x) - \sum_{i=1}^N a_i \E_{x_\perp} \sigma(\langle w_{i\parallel}, x_\parallel \rangle + \langle w_{i\perp}, x_\perp \rangle + c_i) \right)^2 \\
    &+ {\var}_{x_\perp} \left( \sum_{i=1}^N a_i \sigma(\langle w_i, x \rangle + c_i)\right) \, .
\end{align*}
We denote $\overline{\sigma}_i(\lambda) = \E_{x_\perp}\left[\sigma(\lambda + \langle w_{i\perp}, x_\perp \rangle)\right]$. This corresponds to a smoothed version of the non-linearity~$\sigma$. (For instance, in the case of Gaussian inputs $x \sim \cN(0,I_d)$, the smoothing noise $\langle w_{i\perp}, x_\perp \rangle$ would be Gaussian $\cN(0, \Vert w_{i\perp} \Vert^2)$). We then obtain:
\begin{align*}
    &\E_{x_\perp}  \left[\left(f(x) - {f}_{\textnormal{RF}}(a;x)\right)^2\right] \\
    &\qquad= \left(\varphi(U^\top x) - \sum_{i=1}^N a_i  \overline{\sigma}_i(\langle w_{i\parallel}, x_\parallel \rangle + c_i) \right)^2 \\
    &\qquad+ \sum_{i,j=1}^N a_i a_j{\cov}_{x_\perp} \left( \sigma(\langle w_i, x \rangle + c_i), \sigma(\langle w_j, x \rangle + c_j)\right) \, .
\end{align*}
Thus, returning to \eqref{eq:risk}--\eqref{eq:risk-bis}, we obtain 
\begin{align}
    \mathcal{L}(a) &= \frac{1}{2} \E_{x_\parallel} \left[\left(\varphi(U^\top x) - \sum_{i=1}^N a_i  \overline{\sigma}_i(\langle w_{i\parallel}, x_\parallel \rangle + c_i) \right)^2\right] \nonumber\\ 
    &\qquad+ \frac{1}{2} a^\top \Lambda a \nonumber \\
    &= \frac{1}{2} \E_{z} \left[\left(\varphi(z) - \sum_{i=1}^N a_i  \overline{\sigma}_i(\langle U^\top w_{i}, z \rangle + c_i) \right)^2\right]\label{eq:equivalence} \\
    &\qquad+ \frac{1}{2} a^\top \Lambda a \, , \label{eq:equivalence-bis}
\end{align}
where $z = U^\top x$ and 
\begin{equation*}
    \Lambda_{i,j} = \E_{x_\parallel} \left[{\cov}_{x_\perp} \left( \sigma(\langle w_i, x \rangle + c_i), \sigma(\langle w_j, x \rangle + c_j)\right)\right] \, .
\end{equation*}
The take-home message is that the high-dimensional regression problem in $x \in \RR^d$ reduces to a lower dimensional regression problem in $z \in \RR^k$ with an additional regularization term $a^\top \Lambda a$ and modified features. The non-linearities are smoothed and the feature vectors $w_i$ are projected onto~$U$. If $w_i \sim \cN(0,\frac{1}{d}I_d)$, then $U^\top w_i \sim \cN(0, \frac{1}{d} I_k)$. This gives small features: $\E \Vert U^\top w_i \Vert^2 = \frac{k}{d}$. 

As a consequence, minimizing only the first term in \eqref{eq:equivalence}--\eqref{eq:equivalence-bis}, and taking the minimum norm solution, would lead to a minimum degree solution by Section \ref{sec:main}. However, the second term, that controls the variance of the model in the orthogonal direction, actually has an important effect on the chosen minimizer. As we demonstrate in Example~\ref{example: 1} below, in some cases it can break down the MDI bias.

\begin{example} \label{example: 1}
    Consider the target function be $f(x) = 1$ with GOTU constraint $x_1 = 1$, and assume that the support of the training distribution contains a subset of the hyperplane $\{x \in \RR^d \, \vert x_1 = 1\}$ of the form $\{1\}\times S_2\times \dots \times S_d$ with $|S_2|, \dots,|S_d| \geq 3$. Then the MDI is given by the target function itself, but the random features model trained in sparse regime with $\sigma(x) = (1+x)^2$ converges to $f_{\textnormal{RF}}(x) = \frac{2}{5} x_1 + \frac{3}{5}$ (as $N \rightarrow \infty$ before $d \rightarrow \infty$).
    This shows that the random features model in general does not converge to the MDI in the sparse case, provided that the training distribution has strictly more than two inputs on each coordinate. Thus it is not possible to naively extend the results of \cite{gotu} beyond the hypercube $\{-1,1\}^d$. 
\end{example}
\label{rmk:example1}
See the proof of Example \ref{example: 1} in Appendix \ref{sec:example proof}, and the simulation results in Figure \ref{fig: 6}. From Figure \ref{fig: 6} we see that even for moderate values $d=15$ and $N=1024$, the model converges close to the asymptotic value.
\begin{remark}
    Empirically we observed the lost of MDI property in this example for all polynomial activation functions that we checked, e.g. $(1+x)^2, (1+x)^2-1, x^2+x, (1+x)^3, (1+x)^4$. Thus, we believe that it is a general property for polynomial activations rather than a degenerate case.
\end{remark}

\section{MDI for Data Embedded in Roots of Unity}
\label{sec:roots_unity}

We recall that $\U_n = \left\{\exp\left(i\frac{2\pi k}{n}\right) \, \vert \, k = 0, \dots, n-1\right\} \subset \CC$ denotes the $n$-roots of unity. Consider learning a target function $f:\U_n^d \to \CC$ using a random feature model 
\begin{equation*}
    f_{\textnormal{RF}}(a;x) = \frac{1}{\sqrt{N}} \sum_{i=1}^N a_i \phi_{w,b}(x) \, ,
\end{equation*}
where the random features are defined as $\phi_{w,b}(x) = \sigma(\langle w, x \rangle + b)$. Compared to Section \ref{sec:setting}, this section takes the suitable generalization to the complex case: $a_i \in \CC$, $b_i \in \CC$ with distribution $(\Re b_i, \Im b_i) \sim \cN\left(0, \frac{1}{d}I_2\right)$, $w_i \in \CC^d$ with distribution $(\Re w_{i1}, \Im w_{i1}, \dots, \Re w_{id}, \Im w_{id}) \sim \cN\left(0, \frac{1}{d}I_{2d}\right)$, $\sigma: \CC \to \CC$ and $\langle w_i, x \rangle = \overline{w}_{i1}x_1 + \dots + \overline{w}_{id}x_d$. 

Let $\cU \subset \U_n^d$ denote the subset of which $f$ is unseen and denote 
\begin{align*}
    a^* = \argmin_{a: f_{\textnormal{RF}}(a;x) = f(x), \, x\in \cU^c} \Vert a \Vert 
\end{align*}
the minimum norm interpolant of $f$ on the training domain. We recall that $\Pi_p(\U_n^d)$ denotes the set of complex polynomial functions of degree $p$ on $\U_n^d$ (i.e.~the set of functions on $\U_n^d$ that are the restriction to $\U_n^d$ of a multivariate complex polynomial on $\CC^d$ of degree at most $p$). 

\begin{theorem}
\label{thm:roots-unity}
    Denote by $p_*$ the minimum possible degree for a polynomial interpolator of $f$ on the set $\cU^c$. Then 
    \begin{equation*}
        \lim_{d \to \infty} \lim_{N \to \infty} d(f_{\textnormal{RF}}(a^*,.), \Pi_p(\U_n^d)) = 0 \, .
    \end{equation*}
\end{theorem}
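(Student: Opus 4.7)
The plan follows the scheme of Theorem~3.8 of \cite{gotu}, adapted to the complex-valued roots-of-unity setting. First, a standard random-features concentration argument (as $N \to \infty$) converts the minimum-$\ell_2$-norm parameter interpolant into the minimum-RKHS-norm interpolant $g^*_d$ of $f$ on $\cU^c$, where the RKHS is associated with the kernel $K(x,y) = \E_{w,b}\bigl[\phi_{w,b}(x)\overline{\phi_{w,b}(y)}\bigr]$. The remaining task is to show that $g^*_d$ concentrates on characters of degree at most $p_*$ as $d \to \infty$.

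Next I would diagonalize $K$ in a natural basis. The characters $\chi_T(x) = \prod_{j=1}^d x_j^{T_j}$ indexed by $T \in \{0,\dots,n-1\}^d$ form an orthonormal basis of $L^2(\U_n^d)$ equipped with the uniform measure, and coincide with multivariate complex monomials of degree $|T|$ modulo the relations $x_j^n = 1$. Since each $w_j$ has a rotation-invariant complex Gaussian distribution, $K$ is invariant under the continuous diagonal action $(x_j, y_j) \mapsto (e^{i\theta_j} x_j, e^{i\theta_j} y_j)$, which forces $K$ to be diagonal in the character basis with eigenvalues $\{\lambda_T\}$. The key quantitative estimate is
\begin{equation*}
\lambda_T \; = \; \frac{|\sigma_{|T|}|^2}{T!}\,\Bigl(\frac{2}{d}\Bigr)^{|T|} \, + \, O\bigl(d^{-|T|-1}\bigr) \, ,
\end{equation*}
with $\sigma_k$ the $k$-th Taylor coefficient of $\sigma$ at $0$ and $T! = T_1! \cdots T_d!$. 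This follows by expanding $\sigma$ in a Taylor series and each $(\langle w,x\rangle + b)^k$ via the multinomial formula, then taking expectation over the complex Gaussians $b, w_j$: the leading contribution to the character coefficient $\hat{\phi}_T$ comes from the order-$|T|$ Taylor term with multi-exponent exactly $T$, and the regularity assumption on $\sigma$ ensures that $\sigma_{|T|} \neq 0$ for all relevant $T$.

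With $\lambda_T \asymp d^{-|T|}$, the RKHS optimization reduces, up to $d$-independent constants, to $\min_{g = f \text{ on } \cU^c} \sum_T c_T^{-1} d^{|T|} |\hat{g}_T|^2$. In the sparse regime $p_*$ stays bounded with $d$, and any fixed degree-$p_*$ interpolator attains cost $O(d^{p_*})$; this forces $d^{|T|-p_*}|\hat{g}^*_{d,T}|^2 = O(1)$ for every $T$, so every coefficient with $|T| > p_*$ is $o(1)$. A compactness argument on the finite set $\cU^c$ then identifies any limit of $g^*_d$ with a degree-$\leq p_*$ interpolator of $f$ on $\cU^c$, giving $\dist(f_{\textnormal{RF}}(a^*,\cdot), \Pi_{p_*}(\U_n^d)) \to 0$ as claimed. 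The main technical obstacle is the eigenvalue estimate: the modular relations $x_j^n = 1$ allow Taylor terms of order $k > |T|$ to also project onto $\chi_T$ after reductions $T_j + \ell_j n \mapsto T_j$, so one must check that every such cross-contribution carries at least $n$ extra Gaussian factors $|w_j|^2$ and hence an additional $d^{-n}$ in expectation, preserving the leading scaling. A secondary subtlety is the combinatorial growth in $d$ of the number of low-degree characters, handled as in \cite{gotu} by exploiting the fact that, in the sparse regime, only finitely many characters appear with nonzero coefficient in any bounded-degree interpolator of $f$ on $\cU^c$.
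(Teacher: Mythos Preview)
Your proposal is correct and follows essentially the same route as the paper: diagonality of the feature covariance in the character basis via rotation invariance of the complex Gaussian law of $w$, the eigenvalue scaling $\lambda_T \asymp d^{-|T|}$ via Taylor/multinomial expansion, and then the quadratic-form minimization conclusion as in \cite{gotu}. Two small points worth tightening: the continuous diagonal action does not literally preserve $\U_n^d$, so the invariance should be phrased on $w$ (the paper uses the discrete rotation by $e^{2\pi i/n}$, which suffices); and higher-order Taylor terms project onto $\chi_T$ not only through the modular relations $x_j^n=1$ but already through powers of the bias $b$, so the next contribution to $\widehat{\phi}_{w,b}(T)$ is $O(d^{-(|T|+1)/2})$ rather than $O(d^{-(|T|+n)/2})$---still lower order, so your conclusion is unaffected.
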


This result is proved in Appendix \ref{ap:roots-unity}.

%\raphael{I sent the rest of this section to the appendix.}

\section{Experiments}
\label{sec:exper}

\subsection{Experiments Setup}

We run experiments\footnote{Code: https://github.com/DenisPushkin/GOTU-real-valued} with the random features (RF) model and Transformer \cite{vaswani2017attention}.
For the RF model, we sample 65536 training points from the standard Gaussian distribution (except for the coordinates affected by the GOTU constraint, for which we simply hard code the required value) and train the model using Gradient Descent with line search (we refer to Appendix \ref{sec: gradient line search} for the exact procedure).
For convex functions with Lipschitz continuous gradient, this method provably converges to the global optimum and does not require the learning rate tuning.
As for Transformer, we use AdamW optimizer \cite{loshchilov2017decoupled} (without weight decay), and for each batch we generate 256 random samples satisfying the GOTU constraint on the fly, imitating the access to the whole data on the seen domain.
% This way we make sure that the results are not contaminated by the sampling error.
For the exact Transformer architecture we used, see Appendix \ref{sec: transformer}.

In both cases, we train the model on the data satisfying GOTU constraints and then evaluate on the full domain to capture its behavior on the unseen data.
In case of the real-valued training domain, we evaluate the Hermite coefficients of the model. 
Note that the choice of Hermite polynomial basis is arbitrary, yet sufficient for our needs.
Indeed, we are mainly interested in the polynomial degree of the function learnt by the model, which does not depend on the choice of polynomial basis. 

When the training domain is a discrete grid, i.e. represented by $x \in \mathcal{X}^d$, where $\mathcal{X}$ is a finite set, we evaluate the model's coefficient considering it as a simple multivariate polynomial.
It is justified by the fact that the set $\mathcal{B} = \{\prod_{i=1}^d x_i^{t_i} \: \vert \: 0 \leq t_i \leq |\mathcal{X}| - 1 \: \forall i \}$ of monomials with degree at most $|\mathcal{X}| - 1$ in each variable forms a basis of functions in $\mathcal{X}^d$.
This result may be derived as a consequence of Combinatorial Nullstellensatz \cite{alon1999combinatorial}.
Note that in a special case where $\mathcal{X} = \{\pm 1\}$, this basis produces the Fourier-Walsh basis of boolean functions, which was a central ingredient of MDI analysis in \cite{gotu}.
%We again emphasize that the degree of the function does not depend on the choice of polynomial basis, hence, we settled with the ordinary polynomials described above.

\subsection{Small Features Regime}
First, we empirically confirm convergence to min-degree interpolator (MDI) for RF model in small features regime with polynomial activation (Theorem \ref{theorem: 1}).
We run two experiments: 1) $f(x) = 1$ with GOTU constraint $x_1 = 1$ (see Figure \ref{fig: 1}) and 2) $f(x) = x_2^2 + x_2 + 1$ with $x_1 = 1$ (Figure \ref{fig: 2}).
In both cases, the target function $f$ is itself an MDI, but in the second case the MDI is not unique: any function of the form $f(x) + (x_1-1) \Delta(x)$ with $deg(\Delta) \leq 1$ would be an MDI.
As predicted by Theorem \ref{theorem: 1}, the RF model converges to MDI in both cases.
However, in the second experiment, the trained model depends on the variable $x_1$, while the target function does not.
This shows that the random features model in small features regime does not always converge to "the simplest"\footnote{One possible formalization of "the simplest" MDI is a minimum degree-profile interpolator, defined in \cite{gotu}.} MDI.

\begin{figure}
\vskip 0.2in
\begin{center}
\centerline{\includegraphics[width=\columnwidth]{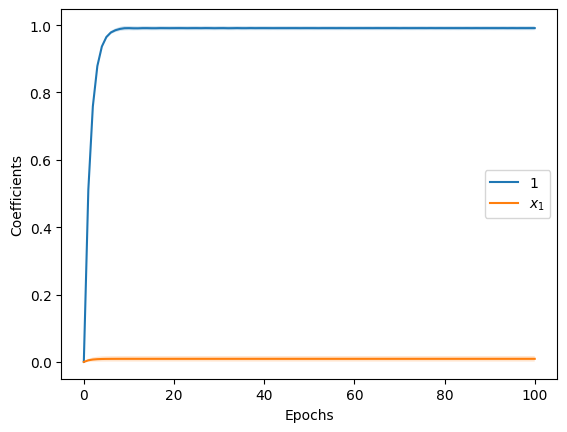}}
\caption{Training the random features model on $f(x) = 1$ with GOTU constraint $x_1 = 1$ in small features regime. Here, $d=2$, $N=256$, $\varepsilon=(0.05)^2$, and $\sigma(x) = (1+x)^2$.}
\label{fig: 1}
\end{center}
\vskip -0.2in
\end{figure}

\begin{figure}
\vskip 0.2in
\begin{center}
\centerline{\includegraphics[width=\columnwidth]{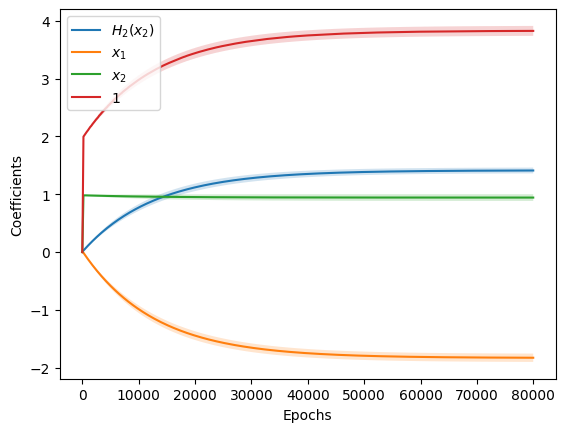}}
\caption{Training the random features model on $f(x) = x_2^2 + x_2 + 1$ with GOTU constraint $x_1 = 1$ in small features regime. Here, $d=2$, $N=16384$, $\varepsilon = (0.05)^2$ and $\sigma(x) = (1+x)^2$. The model converged to MDI, but not "the simplest one", since it depends on $x_1$.}
\label{fig: 2}
\end{center}
\vskip -0.2in
\end{figure}
    
\begin{table*}
\caption{Training the random features model on $f(x) = x_1x_2$, $x\in\RR^d$ with GOTU constraint $(x_1-1)(x_2-1)=0$ in sparse regime. Here, $d=15$ and $N=40000$. The MDI is given by $x_1+x_2-1$.}
\label{table: 10}
\vskip 0.15in
\begin{center}
\begin{small}
\begin{sc}
\begin{tabular}{l|c|c|c|c}
\toprule
Activation & 1 & $x_1$ & $x_2$ & $x_1 x_2$ \\
\midrule
ReLU & $-0.952 \pm 0.002$ & $0.955 \pm 0.005$ & $0.955 \pm 0.004$ & $0.042 \pm 0.009$ \\
Shifted ReLU & $-0.957 \pm 0.002$ & $0.958 \pm 0.004$ & $0.958 \pm 0.005$ & $0.039 \pm 0.009$ \\
Sigmoid & $-1.013 \pm 0.003$ & $0.996 \pm 0.003$ & $0.996 \pm 0.006$ & $-0.001 \pm 0.009$ \\
Softplus & $-0.975 \pm 0.003$ & $0.978 \pm 0.004$ & $0.977 \pm 0.006$ & $0.022 \pm 0.010$ \\
\bottomrule
\end{tabular}
\end{sc}
\end{small}
\end{center}
\vskip -0.1in
\end{table*}

\subsection{Random Features Model with Standard Activations}

Now, we examine the RF model with standard (non-polynomial) activations.
First, we compare the sparse and the small features regimes on the target $f(x) = 1$ with GOTU constraint $x_1 = 1$.
For both regimes, we use the same model architecture with $d=15$ input dimension and $N = 1024$ random features and compare the same set of activation functions.
You can see the result in Table \ref{tab: 1} for small features regime and Table \ref{tab: 2} for sparse regime.
We see that in sparse regime, the RF model in general learns a linear interpolator instead of the constant one (the only exception is Sigmoid activation). 
For small features regime, the RF model converges to the constant interpolator for all activations, except for ReLU.
We conjecture that this happens because $\text{ReLU}(0) = 0$, which breaks the Assumption \ref{assumption: 3}, used in the Theorem \ref{theorem: 1} (note that this assumption was stated only for polynomial activations).
In contrast, with Shifted ReLU activation, given by $\text{Shifted ReLU}(x) = \text{ReLU}(x) - 1$, the convergence to MDI is restored.
Hence, we conjecture that Assumption \ref{assumption: 3} is also a prerequisite for non-polynomial activations to guarantee the convergence to MDI in small features regime.

Next, we train RF model in sparse regime on $f(x) = x_1x_2$ with GOTU constraint $(x_1-1)(x_2-1)=0$ (see Table \ref{table: 10}), where the MDI is given by $x_1+x_2-1$.
In this example, the RF model converges close to MDI for all activations we tried, which shows that, for some target functions, the MDI bias may also holds for the RF model in sparse regime.

\begin{table}
\caption{Training the random features model on $f(x) = 1$, $x\in\RR^d$ with GOTU constraint $x_1 = 1$ in small features regime with $\varepsilon = (0.03)^2$. Here, $d=15$ and $N=1024$.}
\label{tab: 1}
\vskip 0.15in
\begin{center}
\begin{small}
\begin{sc}
\begin{tabular}{l|c|c}
\toprule
Activation & 1 & $x_1$ \\
\midrule
 $(1+x)^2$ & $0.997 \pm 0.002$ & $0.001 \pm 0.003$ \\
ReLU & $0.564 \pm 0.009$ & $0.430 \pm 0.010$ \\
Shifted ReLU & $1.000 \pm 0.000$ & $-0.001 \pm 0.003$ \\
Sigmoid & $1.000 \pm 0.000$ & $-0.001 \pm 0.003$ \\
Softplus & $1.000 \pm 0.001$ & $-0.001 \pm 0.003$ \\
\bottomrule
\end{tabular}
\end{sc}
\end{small}
\end{center}
\vskip -0.1in
\end{table}

\begin{table}
\caption{Training the random features model on $f(x) = 1$, $x\in\RR^d$ with GOTU constraint $x_1 = 1$ in sparse regime. Here, $d=15$ and $N=1024$.}
\label{tab: 2}
\vskip 0.15in
\begin{center}
\begin{small}
\begin{sc}
\begin{tabular}{l|c|c}
\toprule
Activation & 1 & $x_1$ \\
\midrule
 $(1+x)^2$ & $0.624 \pm 0.017$ & $0.374 \pm 0.017$ \\
ReLU & $0.564 \pm 0.009$ & $0.431 \pm 0.011$ \\
Shifted ReLU & $0.782 \pm 0.009$ & $0.217 \pm 0.011$ \\
Sigmoid & $0.992 \pm 0.003$ & $0.007 \pm 0.002$ \\
Softplus & $0.789 \pm 0.010$ & $0.208 \pm 0.012$ \\
\bottomrule
\end{tabular}
\end{sc}
\end{small}
\end{center}
\vskip -0.1in
\end{table}

Finally, Example \ref{example: 1} illustrates that the RF model in sparse regime with polynomial activation generally does not learn the MDI.
But how far can it depart from the MDI, e.g. can the degree of the trained model exceed the minimum degree of interpolator by more than one?
In Figure \ref{fig: 5} we demonstrate that the RF model with $\sigma(x) = (1+x)^4$ activation trained on $f(x) = 1$ with GOTU constraint $x_1 = 1$ learns the quadratic function.
It shows that the RF model in sparse regime can exceed the MDI by more than one degree.
We conjecture that this gap can be arbitrarily large as we increase the degree of the polynomial activation function.

\begin{figure}
\vskip 0.2in
\begin{center}
\centerline{\includegraphics[width=\columnwidth]{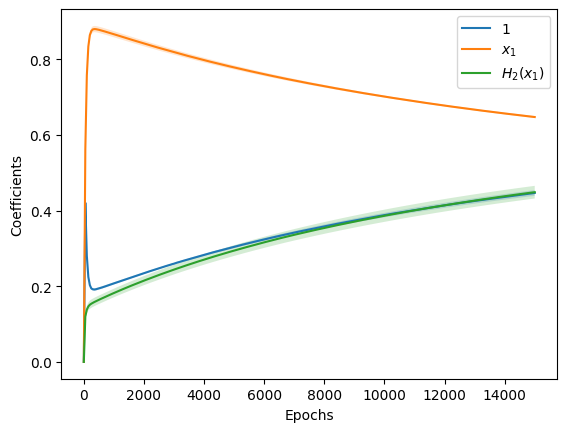}}
\caption{Training the random features model on $f(x) = 1$ with GOTU constraint $x_1 = 1$ in sparse regime with $\sigma(x) = (1+x)^4$ activation. Here, $d=15$, $N=3\cdot10^5$, and $H_2(x_1)$ denotes the normalized second degree Hermite polynomial. The MDI is a constant function $1$, but the model learns the quadratic function.}
\label{fig: 5}
\end{center}
\vskip -0.2in
\end{figure}

\subsection{Transformer and Random Features with Discrete Input}

Finally, we consider the input variable $x$ from the discrete space.
It allows us to apply Transformer model in our experiments.

For Transformer, we run the following experiments: 1) $f(x) = 1$,  $x \in \{-2, -1, 0, 1, 2\}^d$ with GOTU constraint $x_1=1$ in dimension $d=15$ (see Figure \ref{fig: 4}) and 
2) $f(x) = x_1x_2$, $x\in \{ -1, 0, 1\}^d$ with $(x_1-1)(x_2-1)=0$ and $d=15$ (Figure \ref{fig: 3}).
In the first experiment, the MDI is given by a target function $f(x) = 1$ itself, and Transformer indeed learns a constant function and neglects the constrained variable $x_1$.
In the second experiment, the MDI is given by a linear function $f(x) = x_1 + x_2 - 1$.
In this case, the Transformer's behavior depends on the learning rate. 
With a moderate learning rate of $10^{-4}$ (Figure \ref{fig: 3}, left), Transformer's coefficients are noisy at the first half of the training, but then sharply stabilize and converge to the interpolator\footnote{Of course, it's just a hypothesis that Transformer converges to this exact function. In the experiments, the final coefficients of the Transformer are very close to $\pm 1/2$, but never equal to it.} $f_{int}(x) = \frac{1}{2}(x_1 + x_2 - x_1^2 + x_1 x_2 - x_2^2 + x_1^2 x_2^2)$ (see the exact coefficients in Table \ref{tab: 3}). This shows that Transformer consistently learns the interpolator of degree 4 instead of the linear MDI.
We also repeat this experiment with $10^{-5}$ learning rate - the same one which leads to the min-degree interpolator for boolean functions in \cite{gotu} (Figure \ref{fig: 3}, right).
In this case, Transformer's coefficients did not converge even after $6\cdot10^{6}$ iterations. However, the trajectory suggests that the coefficient of $x_1 x_2$ is non-negligible, which means that Transformer learns at least a quadratic function.
Moreover, the coefficient $x_1 x_2$ is likely to dominate all other coefficients, implying that the learnt function is not an MDI even in a leaky sense (i.e. the high-degree monomials are not dominated by the low-degree alternatives).

Note the crucial difference with the boolean case, where Transformer converges to MDI when trained on the same target function with $10^{-5}$ learning rate \cite{gotu}.\footnote{The other distinction between our experiments and the ones made by \cite{gotu} is that the latter stops the training when the loss reaches a low enough threshold, while we train the model longer until its coefficients are well stabilized. It may happen that (leaky) MDI bias is stronger when lower learning rates or early stopping is used; we leave this hypothesis, as well as the evolution of the MDI on long training past a `low' threshold for future research.} 
This shows that min-degree bias for Transformer does not generalize beyond the boolean domain.

We also train RF model on $f(x) = 1$, $x \in \{-2, -1, 0, 1, 2\}^d$ with GOTU constraint $(x_1-1)(x_2-1)=0$ in dimension $d = 15$, using $\sigma(x) = (1+x)^2$ activation (see Figure \ref{fig: 6}). We observe that the RF model learns a linear function, while the MDI is given by a constant. It confirms the statement of Example \ref{example: 1} that even for discrete domains, the RF model in sparse regime with polynomial activation does not converge to MDI.

\begin{figure}
\vskip 0.2in
\begin{center}
\centerline{\includegraphics[width=\columnwidth]{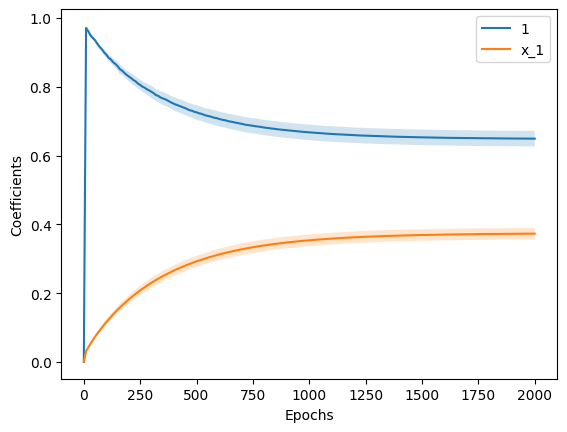}}
\caption{Training the random features model on $f(x) = 1$, $x\in\{-2, -1, 0, 1, 2\}^d$ with GOTU constraint $x_1=1$ and $\sigma(x) = (1+x)^2$ activation. Here, $d=15$, $N = 1024$. While MDI is given by a constant function, the model learns a linear interpolator.}
\label{fig: 6}
\end{center}
\vskip -0.2in
\end{figure}

\begin{figure}
\vskip 0.2in
\begin{center}
\centerline{\includegraphics[width=\columnwidth]{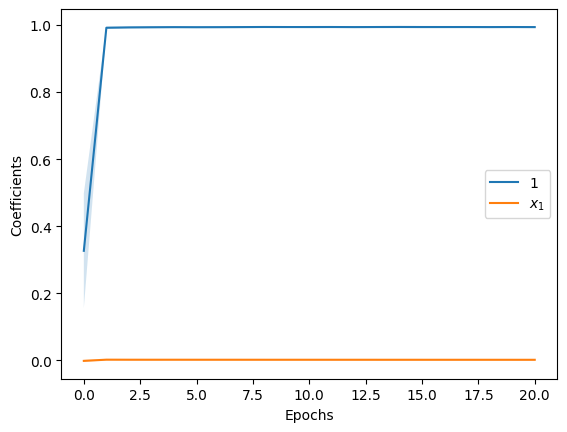}}
\caption{Training Transformer on $f(x) = 1$, $x\in\{-2, -1, 0, 1, 2\}^d$ with GOTU constraint $x_1=1$ using AdamW optimizer with $10^{-4}$ learning rate. Here, $d=15$.}
\label{fig: 4}
\end{center}
\vskip -0.2in
\end{figure}

\begin{figure*}
     \centering
     \begin{subfigure}[b]{0.45\textwidth}
         \centering
         \includegraphics[width=\textwidth]{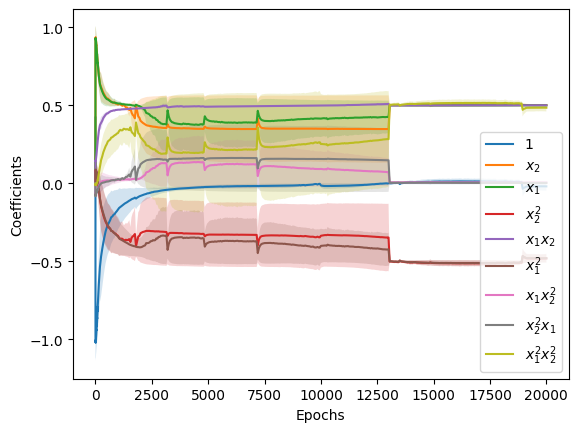}
         %\caption{Loss}
     \end{subfigure}
     \begin{subfigure}[b]{0.45\textwidth}
         \centering
         \includegraphics[width=\textwidth]{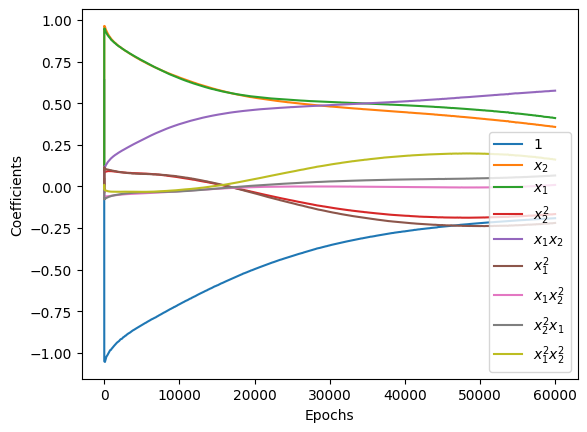}
         %\caption{Gradient norm}
     \end{subfigure}
     \caption{Training Transformer on $f(x) = x_1x_2$, $x\in\{-1, 0, 1\}^d$ with GOTU constraint $(x_1-1)(x_2-1)=0$ in dimension $d=15$ using AdamW optimizer. The MDI is given by $x_1+x_2-1$. We used the learning rate $10^{-4}$ for the left plot, and $10^{-5}$ for the right one.}
     \label{fig: 3}
\end{figure*}

\begin{table}
\caption{Final coefficients of the trained Transformer on $f(x) = x_1x_2$, $x\in \{ -1, 0, 1\}^d$ with GOTU constraint $(x_1-1)(x_2-1)=0$ in dimension $d=15$. Here, we used AdamW optimizer with $10^{-4}$ learning rate.}
\label{tab: 3}
\vskip 0.15in
\begin{center}
\begin{small}
\begin{sc}
\begin{tabular}{c|r}
\toprule
Monomial & Coefficient \\
\midrule
1 & $-0.020 \pm 0.028$ \\
$x_1$ & $0.499 \pm 0.000$ \\
$x_2$ & $0.501 \pm 0.001$ \\
$x_1^2$ & $-0.480 \pm 0.027$ \\
$x_1 x_2$ & $0.500 \pm 0.004$ \\
$x_2^2$ & $-0.481 \pm 0.027$ \\
$x_1^2 x_2$ & $0.002 \pm 0.005$ \\
$x_2^2 x_1$ & $0.005 \pm 0.003$ \\
$x_1^2 x_2^2$ & $0.484 \pm 0.024$ \\
\bottomrule
\end{tabular}
\end{sc}
\end{small}
\end{center}
\vskip -0.1in
\end{table}

\begin{comment}
\begin{table*}
\caption{Final coefficients of the trained Transformer on $f(x) = x_1x_2$, $x\in \{ -1, 0, 1\}^d$ with $(x_1-1)(x_2-1)=0$ GOTU constraint. Here, $d=15$. We used AdamW optimizer with $10^{-4}$ learning rate and no weight decay.}
\label{tab: 3}
\vskip 0.15in
\begin{center}
\begin{small}
\begin{sc}
\begin{tabular}{c|c|c|c|c|c|c|c|c}
\toprule
1 & $x_1$ & $x_2$ & $x_1^2$ & $x_1 x_2$ & $x_2^2$ & $x_1^2 x_2$ & $x_2^2 x_1$ & $x_1^2 x_2^2$ \\
\midrule
$-0.020 \pm 0.028$ & $0.499 \pm 0.000$ & $0.501 \pm 0.001$ & $-0.480 \pm 0.027$ & $0.500 \pm 0.004$ & $-0.481 \pm 0.027$ & $0.002 \pm 0.005$ & $0.005 \pm 0.003$ & $0.484 \pm 0.024$ \\
\bottomrule
\end{tabular}
\end{sc}
\end{small}
\end{center}
\vskip -0.1in
\end{table*}
\end{comment}

\section{Additional Related Literature}

This paper is a generalization and extension of \cite{gotu}. 

Out-of-distribution generalization is a critical aspect of machine learning that has been studied both in theory \cite{ben2006analysis, mansour2009domain, redko2020survey} and in practice \cite{gulrajani2020search, miller2021accuracy, wiles2022a}. Our work considers an extreme case of distribution shift in which part of the domain is entirely unseen during the training.
OOD generalization and the ability to extrapolate have also been used as proxies for measuring the reasoning capabilities of neural networks \cite{saxton2019analysing, Zhang2021PointerVR, csordas2021devil, zhang2022unveiling} as these models are prone to memorization of training samples \cite{carlini2019secret-mem1,feldman2020neural-mem2,kandpal2022deduplicating-mem3,carlini2022quantifying-mem4, Zhang2021PointerVR} or learning undesirable shortcuts \cite{zhang2022unveiling}.
% These out-of-distribution is created synthetically by holdoing out some part of the training set to make the memorization impossible. 
A special case is length generalization \cite{zaremba2014learning, lake2018generalization, hupkes2020compositionality, zhang2022unveiling, anil2022exploring-length}, i.e., generalization to the input lengths beyond what is seen during the training.

% implicit bias results
% often simple models and architectures
% particularly spectral bias
% particularly diagonal linear neural networks
It has been shown that training with gradient descent imposes particular implicit regularization on the solutions found by the models such as sparsity \cite{moroshko2020implicit}, norm minimization \cite{bartlett2021deep}, and margin maximization (in linear classification setting) \cite{soudry2017implicit}. 
This implicit regularization (or implicit bias) of neural networks trained with gradient-based algorithms has been used to explain the generalization of (often overparametrized) models \cite{bartlett2021deep}.
These results depend on the optimizer \cite{gunasekar2018characterizing} and model \cite{gunasekar2018implicit} and are usually proven for simple models such as linear models \cite{soudry2017implicit,  yun2020unifying, jacot2021saddle} including diagonal linear neural networks \cite{gunasekar2018implicit, moroshko2020implicit}. Our result for the random feature model builds upon the implicit bias toward solutions with minimum norm \cite{bartlett2021deep}. Related to us is also the spectral bias \cite{xu2019frequency, rahaman2019spectral} stating that neural networks, when learning a function in continuous settings, capture the lower frequency components faster (note that degree in Boolean functions plays a similar role to the frequency). 

\section{Conclusion}
This paper shows that the min-degree bias in the non-Boolean case is mitigated by various phenomena. One setting admits a clear min-degree bias for the considered models, that with tokens being roots of unities. Moreover, Transformer may still admit some leaky min-degree bias, and it remains open to understand what else drives the bias of Transformers (e.g., the influence of close samples).

\section*{Impact Statement}

This paper presents work whose goal is to advance the field of Machine Learning. There are many potential societal consequences of our work, none which we feel must be specifically highlighted here.

\bibliography{references.bib}
\bibliographystyle{icml2024}

%%%%%%%%%%%%%%%%%%%%%%%%%%%%%%%%%%%%%%%%%%%%%%%%%%%%%%%%%%%%%%%%%%%%%%%%%%%%%%%
%%%%%%%%%%%%%%%%%%%%%%%%%%%%%%%%%%%%%%%%%%%%%%%%%%%%%%%%%%%%%%%%%%%%%%%%%%%%%%%
% APPENDIX
%%%%%%%%%%%%%%%%%%%%%%%%%%%%%%%%%%%%%%%%%%%%%%%%%%%%%%%%%%%%%%%%%%%%%%%%%%%%%%%
%%%%%%%%%%%%%%%%%%%%%%%%%%%%%%%%%%%%%%%%%%%%%%%%%%%%%%%%%%%%%%%%%%%%%%%%%%%%%%%
\newpage
\appendix
\onecolumn
\section{Experiments Details}
\label{sec: exp details}
\subsection{Gradient Descent with Line Search}
\label{sec: gradient line search}
\begin{algorithm}
   \caption{Gradient Descent with Line Search}
   \label{gd_line_search}
\begin{algorithmic}
   \STATE {\bfseries Input:} data point $x_0$, Liphitz constant estimator $L_0 = 1$
   \FOR{$n=0,1,\ldots$}
   \STATE $x_{n+1} = x_n - \frac{1}{L_n} \nabla f(x_n)$
   \WHILE{not $f(x_{n+1}) \leq f(x_n) - \frac{1}{2 L_n} \| \nabla f(x_n) \|^2$}
   \STATE $L_n := 2 L_n$
   \STATE $x_{n+1} = x_n - \frac{1}{L_n} \nabla f(x_n)$
   \ENDWHILE
   \STATE $L_{n+1} = L_n / 2$
   \ENDFOR
\end{algorithmic}
\end{algorithm}

\subsection{Transformer Architecture}
\label{sec: transformer}
For Transformer, we use the encoder-only architecture from the Vision Transformer model \cite{dosovitskiy2020image}.
This model consists of 12 layers, each of them formed by multi-head self-attention block with 6 heads followed by Feed-Forward block.
Following standard practices, there is a layer normalization before each self-attention and Feed-Forward blocks. The model uses decoupled weights, i.e.  there is no parameters sharing between the layers or the attention heads.
For each input sequence, the model prepends a special classification token at the beginning of the sequence.
Then it encodes each token (which comes from the discrete alphabet) using the input embedding layer and adds it to the trainable positional embedding. We keep the embedding dimension equal to 64 both at the beginning of the model and between the model blocks.
The Feed-Forward module is represented by a 2-layers MLP with hidden dimension 64 and GELU activation \cite{hendrycks2016gaussian}. To get the final prediction, the model extracts the final classification token embedding, and feeds it through the layer normalization followed by a linear layer with a single output.

\section{Proof of Theorem \ref{theorem: 1}}
\label{sec:lemmas}

\paragraph{Reminder on the Hermite decomposition.} Let $H_t$ denote the Hermite polynomial of degree $t$, using the probabilist convention, and normalized such that $\{H_t, \, t \geq 0\}$ is an orthonormal basis of $L^2(\RR, \gamma_1)$. (We recall that $\gamma_d$ denotes the standard Gaussian measure in dimension~$d$.) Said differently, if $Z$ is a univariate standard normal random variable, we assume that $\E\left[H_s(Z) H_t(Z) \right] = \bfone_{s = t}$. Further, we define the multivariate Hermite polynomials as $\chi_T(x) = \prod_{i=1}^d H_{t_i}(x_i)$, where $T = (t_1, \ldots, t_d) \in \NN^d$.
Note that $\deg(\chi_T(x)) = |T| = T_1 + \dots + T_d$. The set of functions $\{\chi_T(x), T\in\NN^d\}$ forms an orthonormal basis of $L^2(\RR^d, \gamma_d)$.

Recall that $\Pi_p(\RR^d)$ denotes the set of polynomials of degree at most $p$ on $\RR^d$.
Any $h \in \Pi_p(\RR^d)$ admits a Hermite decomposition of the form $h(x) = \sum_{T\in\NN_{\leq p}^d} \widehat{h}(T) \chi_T(x)$,
where $\NN_{\leq p}^d = \{T\in \NN^d \, \vert \, |T| \leq p\}$ and 
$\widehat{h}(T) = \E_Z[h(Z) \chi_T(Z)]$, $Z \sim \cN(0, I_d)$.

We now turn to the proof of the theorem.

\begin{lemma} \label{lemma: 9}
    If $\sigma \in \Pi_p(\RR)$, then w.h.p.~we have that for any large enough $N$,
    \begin{gather*}
        \im(f_{\textnormal{RF}}) = \Pi_p(\RR^d) \, .
    \end{gather*}
\end{lemma}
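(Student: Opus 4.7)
My plan is to split the equality into the two inclusions. The containment $\im(f_{\textnormal{RF}}) \subseteq \Pi_p(\RR^d)$ is immediate and deterministic: each feature $\phi_{w_i,b_i}(x) = \sigma(\langle w_i,x\rangle+b_i)$ is, by composition, a polynomial in $x$ of degree at most $p$, hence so is every finite linear combination. Since $\Pi_p(\RR^d)$ has finite dimension $D = \binom{d+p}{p}$, the opposite inclusion is equivalent to saying that, once $N \geq D$, the random functions $\phi_{w_1,b_1},\dots,\phi_{w_N,b_N}$ span $\Pi_p(\RR^d)$.

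The core of the argument is a deterministic spanning claim: the family $\{\phi_{w,b}:w\in\RR^d,\, b\in\RR\}$ linearly spans $\Pi_p(\RR^d)$. I would prove this by contradiction: assume some nonzero linear functional $\ell$ on $\Pi_p(\RR^d)$, written in the monomial basis as $\ell(h)=\sum_{T\in\NN^d_{\leq p}} d_T [h]_{x^T}$, vanishes on every $\phi_{w,b}$. Expanding $\sigma(\langle w,x\rangle+b) = \sum_{k=0}^p b_k(\langle w,x\rangle+b)^k$ and extracting the coefficient of $x^T$ gives
\begin{equation*}
\ell(\phi_{w,b}) \;=\; \sum_{j=0}^p c_j(w)\, R_j(b),\qquad c_j(w)=\sum_{|T|=j} d_T \binom{j}{T} w^T,\quad R_j(b)=\sum_{k=j}^p b_k\binom{k}{j} b^{k-j}.
\end{equation*}
Viewed as a polynomial in $b$ with coefficients in $w$, the top-degree contribution to the coefficient of $b^{p-j}$ is $c_j(w)\, b_p\binom{p}{j}$, with all other terms involving only $c_{j'}$ for $j' < j$. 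By induction on $j$, assuming all lower-order $c_{j'}$ vanish, the identity $\ell(\phi_{w,b})\equiv 0$ and the nonvanishing $b_p \neq 0$ from Assumption \ref{assumption: 3} force $c_j(w)\equiv 0$, hence $d_T = 0$ for all $|T|=j$. Iterating yields $\ell = 0$, contradicting the assumption.

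To pass from this deterministic spanning to the random statement, I would fix a basis $e_1,\dots,e_D$ of $\Pi_p(\RR^d)$ and consider the Gram-type determinant $\Delta(w_1,b_1,\dots,w_D,b_D) = \det([\langle \phi_{w_i,b_i}, e_j\rangle]_{i,j})$ (for any nondegenerate pairing, e.g.~matching coordinates in the monomial basis). This is a polynomial in its arguments. By the spanning property just proved, one can choose explicit values of the $(w_i,b_i)$ for which the $\phi_{w_i,b_i}$ are linearly independent, so $\Delta$ is not identically zero. Since the $(w_i,b_i)$ are drawn from an absolutely continuous distribution, $\Delta\neq 0$ almost surely, so the first $D$ random features are linearly independent with probability one, and therefore span $\Pi_p(\RR^d)$; for any $N \geq D$ the same conclusion holds. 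Taking the high-probability event on which this happens gives $\im(f_{\textnormal{RF}})=\Pi_p(\RR^d)$.

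The main obstacle is the inductive algebraic step in paragraph two: keeping track of the polynomial identity in $(w,b)$ carefully enough to isolate $c_j$ in the coefficient of $b^{p-j}$, and verifying that it is exactly the nonvanishing of the top coefficient $b_p$ (part of Assumption \ref{assumption: 3}) that unlocks the induction. Everything else is either a direct expansion or a standard "nonzero polynomial is nonzero almost everywhere" argument.
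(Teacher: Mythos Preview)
Your argument is correct and takes a genuinely different route from the paper. Both proofs split into a deterministic spanning claim and a randomization step, but they handle each step differently.

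For the spanning claim, the paper works in $L^2(\gamma_d)$: it shows the feature covariance operator $M$ is positive definite by assuming $\langle f,\sigma(\langle w,x\rangle+b)\rangle_{L^2(\gamma_d)}=0$ for all $w,b$, then plugging in $w\mapsto \lambda w$, $b=0$, Taylor-expanding in $\lambda$, and identifying powers of $\lambda$. This step uses $\sigma^{(k)}(0)\neq 0$ for \emph{every} $k$, i.e.\ the full strength of Assumption~\ref{assumption: 3}. Your monomial-basis expansion with the induction on the coefficient of $b^{p-j}$ is cleaner and only uses $b_p\neq 0$, so your spanning result holds under the weaker hypothesis $\deg\sigma=p$.

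For the randomization step, the paper invokes the law of large numbers: the empirical covariance $M_N\to M$ almost surely, and since positive-definite matrices form an open set, $M_N$ is positive definite (hence $\im(M_N)=\Pi_p(\RR^d)$) for all large enough $N$ with high probability. Your determinant argument---$\Delta$ is a nonzero polynomial in the Gaussian parameters, hence nonzero almost surely---is more elementary and gives a sharper conclusion: the event has probability $1$ (not merely $1-\eta$) and holds for every $N\geq D=\dim\Pi_p(\RR^d)$ rather than for ``sufficiently large'' $N$. The paper's LLN route, on the other hand, dovetails with its later use of $FF^\top\to\Phi$ in the proof of Theorem~\ref{theorem: 1}, so it is natural in context even if less sharp here.
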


%\raphael{I write below a proof of this conjecture. I think it can still be improved (made shorter), especially the last part.}
\begin{proof}
    Since $\sigma \in \Pi_p(\RR)$, we have that $\forall i \in [N]$: $\phi_{w_i, b_i}(x) = \sigma(\innerproduct{w_i}{x}+b_i)\in \Pi_p(\RR^d)$. 
    Thus, $f_{\textnormal{RF}}(a;x) = \frac{1}{\sqrt{N}} \sum_{i=1}^N a_i \phi_{w_i,b_i}(x)\in \Pi_p(\RR^d)$ $\forall a\in \RR^N$, which shows that $\im(f_{\textnormal{RF}}) \subseteq \Pi_p(\RR^d)$.
    
    It remains to show that $\im(f_{\textnormal{RF}}) \supseteq \Pi_p(\RR^d)$.
    Let $\gamma_d$ denote the standard Gaussian measure in dimension $d$. We define the operator $M: \Pi_p(\RR^d) \to \Pi_p(\RR^d)$ by the formula 
    \begin{align*}
        &M(f) = \E_{w,b} \left[\left\langle f, \sigma(\langle w,x \rangle + b) \right\rangle_{L^2(\gamma_d)} \sigma(\langle w,x \rangle + b)\right] \, , \qquad f \in \Pi_d(\RR^d) \, .
    \end{align*}
    Similarly, we define the empirical version $M_N: \Pi_p(\RR^d) \to \Pi_p(\RR^d)$ by the formula
        \begin{align*}
        &M_N(f) = \frac{1}{N} \sum_{i=1}^N \left\langle f, \sigma(\langle w_i,x \rangle + b_i) \right\rangle_{L^2(\gamma_d)} \sigma(\langle w_i,x \rangle + b_i) \, , \qquad f \in \Pi_d(\RR^d) \, .
    \end{align*}
    The operators $M$ and $M_N$ are positive definite over $(\Pi_p(\RR^d), \langle ., . \rangle_{L^2(\gamma_d)})$. 

    \paragraph{}Assume that $M$ is positive definite. By the law of large numbers, $M_N \xrightarrow[N\to\infty]{} M$ almost surely. As the set of positive definite matrices is an open set, this implies that for all $\eta > 0$, there exists $N_0 \in \NN$ such that for all $N \geq N_0$, $\Pr(M_N\text{ is positive definite}) \geq 1-\eta$. 

    When $M_N$ is positive definite, then $\im(M_N) = \Pi_p(\RR^d)$. As $\im(M_N) \subset \im(f_{\textnormal{RF}}) \subset \Pi_p(\RR^d)$, this enables to conclude that $\im(f_{\textnormal{RF}}) = \Pi_p(\RR^d)$.

    \paragraph{}We thus now need to prove that $M$ is positive definite. Consider $f$ such that $\left\langle f, M(f) \right\rangle_{L^2(\gamma_d)} = 0$. We prove that $f=0$.

     We have that $\E_{w,b} \left[\left\langle f, \sigma(\langle w,x \rangle + b) \right\rangle_{L^2(\gamma_d)}^2\right] = 0$. Thus $\left\langle f, \sigma(\langle w,x \rangle + b) \right\rangle_{L^2(\gamma_d)} = 0$ almost surely. As this expression is a multivariate polynomial in $w$ and $b$, this implies that actually $\left\langle f, \sigma(\langle w,x \rangle + b) \right\rangle_{L^2(\gamma_d)} = 0$ for all $w \in \RR^d$ and $b \in \RR$. In particular, if $\lambda \geq 0$, we have $\left\langle f, \sigma(\lambda \langle w,x \rangle) \right\rangle_{L^2(\gamma_d)} = 0$. 

     We use a Taylor expansion for $\sigma$:
     \begin{equation*}
         \sigma(y) = \sum_{k=0}^p \frac{\sigma^{(k)}(0)}{k!} y^k \, .
     \end{equation*}
     As a consequence, 
     \begin{align*}
         0 = \left\langle f, \sigma(\lambda \langle w,x \rangle) \right\rangle_{L^2(\gamma_d)} = \sum_{k=0}^p \left\langle f , \frac{\sigma^{(k)}(0)}{k!} \left(\lambda \langle w, x \rangle \right)^k\right\rangle_{L^2(\gamma_d)} =  \sum_{k=0}^p\frac{\sigma^{(k)}(0)}{k!} \lambda^k \left\langle f , \langle w, x \rangle^k\right\rangle_{L^2(\gamma_d)} \, .
     \end{align*}
     Identifying powers of $\lambda$ in this expression, and using that $\sigma^{(k)}(0) \neq 0$ for all $k$, we have that 
     \begin{align*}
         \left\langle f , \langle w, x \rangle^k\right\rangle_{L^2(\gamma_d)} \, , \qquad k = 0, \dots, p \, .
     \end{align*}
     To conclude, we only need to prove that the set of functions $\langle w, x \rangle^k$ for $w \in \RR^d$ and $k = 0, \dots, p$ spans $\Pi_p(\RR^d)$. 

     Consider $w$ such that $\Vert w \Vert = 1$. We decompose $f$ into multivariate Hermite polynomials: $f(x) = \sum_{|\beta| \leq p} \widehat{f}(\beta) h_\beta(x)$. Then 
     \begin{align*}
         0 = \left\langle f , h_k(\langle w, x \rangle)\right\rangle_{L^2(\gamma_d)} = \sum_{|\beta| \leq p} \widehat{f}(\beta) \left\langle h_\beta(x) , h_k(\langle w, x \rangle)\right\rangle_{L^2(\gamma_d)} = \sum_{|\beta| \leq p} \widehat{f}(\beta) {k \choose \beta}^{1/2} w_1^{\beta_1} \cdots w_d^{\beta_d} \, .
     \end{align*}
     The last quantity is a multivariate polynomial in $w$, which is zero on the unit sphere. It thus need to be identically zero. Thus for all $\beta$, $\widehat{f}(\beta) =0$. Thus $f=0$. This concludes the proof.
\end{proof}

\begin{lemma} \label{lemma: 8}
    Assume that $A_n \rightarrow A$ as $n \rightarrow \infty$, where $(A_n)_{n=1}^{\infty}$, $A$ are positive-definite matrices in $\RR^{d\times d}$ and let $\mathcal{S}$ be any affine subspace of $\RR^d$.
    Then
    \begin{gather*}
        \text{argmin}_{x\in\mathcal{S}} x^\top A_n x \rightarrow \text{argmin}_{x\in \mathcal{S}} x^\top A x
    \end{gather*}
\end{lemma}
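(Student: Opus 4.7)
The minimization is a strictly convex quadratic over a closed convex set, so the minimizer exists and is unique whenever $A$ (respectively $A_n$) is positive definite; call these minimizers $x^*$ and $x_n^*$. The plan is to write both minimizers in closed form by reducing to an unconstrained problem on a basis of the linear part of $\mathcal{S}$, and then to invoke continuity of matrix inversion on the open set of invertible matrices.

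\textbf{Step 1: parametrize the affine subspace.} Write $\mathcal{S} = x_0 + V$ where $V$ is a linear subspace of $\RR^d$ of dimension $k$, and fix a matrix $P \in \RR^{d \times k}$ whose columns form a basis of $V$. Then every $x \in \mathcal{S}$ is of the form $x = x_0 + P y$ with $y \in \RR^k$, and minimizing $x^\top A x$ over $\mathcal{S}$ is equivalent to minimizing $g(y) := (x_0 + Py)^\top A (x_0 + Py)$ over $\RR^k$.

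\textbf{Step 2: closed form for the minimizer.} Since $A$ is positive definite and $P$ has full column rank, $P^\top A P$ is positive definite on $\RR^k$, hence invertible. The first-order condition $\nabla g(y) = 0$ then gives
\begin{equation*}
    y^* = -(P^\top A P)^{-1} P^\top A x_0, \qquad x^* = x_0 + P y^*.
\end{equation*}
Exactly the same formula with $A$ replaced by $A_n$ yields $y_n^* = -(P^\top A_n P)^{-1} P^\top A_n x_0$ and $x_n^* = x_0 + P y_n^*$, provided $P^\top A_n P$ is invertible.

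\textbf{Step 3: pass to the limit.} Because $A_n \to A$ entrywise, $P^\top A_n P \to P^\top A P$ and $P^\top A_n x_0 \to P^\top A x_0$. Since the set of invertible $k \times k$ matrices is open and matrix inversion is continuous on it, for all $n$ sufficiently large $P^\top A_n P$ is invertible and $(P^\top A_n P)^{-1} \to (P^\top A P)^{-1}$. Combining these continuity statements gives $y_n^* \to y^*$, and therefore $x_n^* = x_0 + P y_n^* \to x_0 + P y^* = x^*$, as required.

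\textbf{Anticipated difficulty.} There is essentially no serious obstacle: the statement is a soft continuity/stability result for strictly convex quadratic programs with linear equality constraints, and the whole argument is linear-algebraic. The only point requiring care is the reduction to an invertible square system: one must choose a basis $P$ of the linear part $V$ (rather than working directly with the possibly singular matrix $A$ restricted to $\mathcal{S}$) so that the optimality condition becomes a genuine linear system $(P^\top A P) y = -P^\top A x_0$ to which the continuity of matrix inversion applies.
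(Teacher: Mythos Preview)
Your proof is correct and is in fact cleaner than the paper's, but the two arguments are genuinely different.

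You exploit the explicit structure of the problem: parametrize the affine subspace by a full-rank matrix $P$, solve the resulting unconstrained quadratic in closed form $y^* = -(P^\top A P)^{-1} P^\top A x_0$, and then read off the convergence from the continuity of $B \mapsto B^{-1}$ on the open set of invertible matrices. (Incidentally, you need not appeal to openness to get invertibility ``for all $n$ sufficiently large'': since each $A_n$ is positive definite and $P$ has full column rank, $P^\top A_n P$ is positive definite for \emph{every} $n$.) The paper, by contrast, argues abstractly: it first shows the minimizers $y_n$ stay in a common bounded set using an eigenvalue bound, then proves that $x \mapsto x^\top A_n x$ converges uniformly to $x \mapsto x^\top A x$ on that set, and finally concludes via a subsequence/partial-limit argument together with strong convexity that every accumulation point of $(y_n)$ must equal $y$.

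Your approach is shorter and more transparent for this particular statement, since everything is linear-algebraic and the closed form is available. The paper's approach, while longer, is more robust: it does not rely on an explicit formula for the minimizer and would transfer with only cosmetic changes to, say, sequences of strongly convex (not necessarily quadratic) objectives converging uniformly on compacta.
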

\begin{proof}
    Let us introduce the following notations:
    \begin{gather*}
        y_n = \text{argmin}_{x\in\mathcal{S}} x^\top A_n x \\
        y = \text{argmin}_{x\in\mathcal{S}} x^\top A x \\
        \rho = \min_{x\in\mathcal{S}} x^\top A x = y^\top A y
    \end{gather*}
    First, let us show that $\|y_n\|_2$ is uniformly bounded over $n$.
    From $A_n \rightarrow A$ we get that $\lambda_{min}(A_n) \rightarrow \lambda_{min}(A)$ and $y^\top A_n y \rightarrow y^\top A y$.
    Thus, $\exists n_0 \in \NN$ s.t. $\forall n \geq n_0$ both of the following holds: $\lambda_{min}(A_n) \geq \lambda_{min}(A)/2 > 0$ and $| y^\top A_n y - y^\top A y| < 1$.
    We claim that for any such $n$ it holds that $\|y_n\|_2 \leq C \overset{def}{=} \sqrt{\frac{2(\rho+1)}{\lambda_{min}(A)}}$.
    Indeed, consider any $x$ such that $\|x\|_2 > C$. Then
    \begin{gather*}
        x^\top A_n x \geq \lambda_{min}(A_n) \|x\|_2^2 \geq \frac{\lambda_{min}(A)}{2} \|x\|_2^2 
        > \frac{\lambda_{min}(A)}{2} \frac{2(\rho+1)}{\lambda_{min}(A)} = \rho + 1
    \end{gather*}
    On the other hand, $y^\top A_n y < y^\top A y + 1 = \rho + 1$.
    Hence, $y^\top A_n y < x^\top A_n x$, which shows that none of such $x$ can be the minimizer.
    This concludes $\|y\|_2 \leq C$.
    As a side effect, from $y^\top A_n y < x^\top A_n x$ $\forall x: \|x\|_2 > C$ we get $\|y\|_2 \leq C$.

    To complete the proof, it is enough to show that $y$ is a unique partial limit of the sequence $(y_n)$. 
    Define $\mathcal{S}' = \{x \in \mathcal{S}: \|x\|_2 \leq C\}$.
    As we have proved above, $y, y_n \in \mathcal{S}'$ for large enough $n$.
    Let us show that the functions $x \rightarrow x^\top A_n x$ converge to $x \rightarrow x^\top A x$ uniformly over $x \in \mathcal{S}'$.
    Indeed,
    \begin{gather*}
        |x^\top A_n x - x^\top A x| \leq \|x\|_2 \cdot \|(A_n-A)x\|_2 \leq \|A_n-A\|_2 \cdot \|x\|_2^2
    \end{gather*}
    and the last term converges to zero uniformly whenever $\|x\|_2$ is uniformly bounded.

    Now let $l$ be any partial limit of $y_n$, that is $\exists(y_{n_k})$: $y_{n_k} \rightarrow l$.
    We want to show that $l = y$.
    Let us fix any $\varepsilon>0$. 
    From the uniform convergence we know that $\exists n_0\in \NN$ s.t. $|x^\top A_n x - x^\top A x| < \varepsilon$ $\forall x \in \mathcal{S}'$, $\forall n \geq n_0$.
    Recall that $y, y_n \in \mathcal{S}'$ for large enough $n$, which means all the elements of $(y_{n_k})$ belong to $\mathcal{S}'$ starting from some $k_0$.
    For these elements, we can estimate
    \begin{gather*}
        \rho = y^\top A y \leq y_{n_k}^\top A y_{n_k} \leq y_{n_k}^\top A_{n_k} y_{n_k} + \varepsilon \leq y^\top A_{n_k} y + \varepsilon 
        \leq y^\top A y + 2 \varepsilon = \rho + 2 \varepsilon
    \end{gather*}
    where the first inequality comes from $y = \text{argmin}_{x\in\mathcal{S}} x^\top A x$, and the third - from $y_{n_k} = \text{argmin}_{x\in\mathcal{S}} x^\top A_{n_k} x$.
    Thus, $\forall \varepsilon>0$ we get
    \begin{gather*}
        \rho \leq y_{n_k}^\top A y_{n_k} \leq \rho + 2 \varepsilon \quad \forall k\geq k_0(\varepsilon)
    \end{gather*}
    which shows that $y_{n_k}^\top A y_{n_k} \rightarrow \rho = \min_{x\in\mathcal{S}} x^\top A x$. 
    Taking into account that the function $x \rightarrow x^\top A x$ is strongly convex (since $A$ is positive-definite), we conclude that $y_{n_k} \rightarrow y$.
    Hence, the only partial limit of $(y_n)$ is $y$, which proves that $y_n \rightarrow y$.
\end{proof}

\begin{lemma} \label{lemma: 2}
    For any basis monomial $\chi_T(x)$ and any non-negative integer $k < |T|$ we have:
    \begin{gather*}
        \mathbb{E}_{x\sim\mathcal{N}(0, I_d)}[(\innerproduct{w}{x}+b)^k \chi_T(x)] = 0
    \end{gather*}
\end{lemma}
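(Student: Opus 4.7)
The plan is to exploit the orthogonality of the multivariate Hermite basis with respect to the standard Gaussian measure. The key observation is that $(\langle w, x\rangle + b)^k$ is a polynomial in $x$ of total degree at most $k$, whereas $\chi_T$ is a Hermite basis element of total degree $|T| > k$, so the two should be orthogonal in $L^2(\RR^d, \gamma_d)$.

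Concretely, I would first expand via the multinomial theorem:
\begin{equation*}
(\langle w, x\rangle + b)^k = \sum_{\alpha \in \NN^d,\ |\alpha| \leq k} c_\alpha(w,b)\, x^\alpha,
\end{equation*}
which makes explicit that $(\langle w,x\rangle + b)^k \in \Pi_k(\RR^d)$ as a function of $x$ (the coefficients $c_\alpha$ depend on $w,b$ but are constants with respect to $x$). Second, I would use the fact recalled at the start of the appendix that $\{\chi_S : S \in \NN^d\}$ is an orthonormal basis of $L^2(\RR^d, \gamma_d)$, and in particular that $\{\chi_S : |S| \leq k\}$ spans $\Pi_k(\RR^d)$. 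Thus I can write $(\langle w, x\rangle + b)^k = \sum_{|S| \leq k} \alpha_S \chi_S(x)$ for some coefficients $\alpha_S$ (depending on $w,b$).

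Third, I would compute
\begin{equation*}
\E_{x \sim \cN(0, I_d)}\bigl[(\langle w,x\rangle + b)^k \chi_T(x)\bigr] = \sum_{|S| \leq k} \alpha_S \, \E\bigl[\chi_S(x)\chi_T(x)\bigr] = \sum_{|S| \leq k} \alpha_S \, \bfone_{S = T} = 0,
\end{equation*}
where the last equality uses the hypothesis $|T| > k$, which forces $S \neq T$ for every $S$ appearing in the sum.

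There is no real obstacle here: the lemma is essentially a restatement of the orthogonality of $\Pi_k(\RR^d)$ and $\{\chi_T : |T| > k\}$ in $L^2(\gamma_d)$. The only mild point to be careful about is to treat $w$ and $b$ as fixed parameters when invoking the Hermite decomposition (so the coefficients $\alpha_S$ are $w,b$-dependent but that is harmless inside the expectation over $x$).
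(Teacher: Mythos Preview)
Your proposal is correct and follows essentially the same approach as the paper: observe that $(\langle w,x\rangle+b)^k$ is a polynomial in $x$ of degree at most $k$, expand it in the Hermite basis using only $\chi_S$ with $|S|\leq k$, and conclude by orthogonality with $\chi_T$ since $|T|>k$. The paper's proof is just a one-sentence version of exactly this argument.
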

\begin{proof}
    Since the term $(\innerproduct{w}{x}+b)^k$ is a polynomial of degree $k$ in $x$, in Hermite basis it only contains Hermite polynomials of degree at most $k$. The rest comes from the orthogonality of Hermite polynomials.
\end{proof}

\begin{lemma} \label{lemma: 3}
    For any non-negative integer $k$ we have:
    \begin{gather*}
        \mathbb{E}_{x\sim\mathcal{N}(0, I_d)} [| (\innerproduct{w}{x}+b)^k \chi_T(x) |] \leq \varepsilon^{k/2} \text{poly}_T(\Bar{w}, \Bar{b})
    \end{gather*}
    where $\text{poly}_T(\Bar{w}, \Bar{b})$ is some polynomial in $\Bar{w}_1, \ldots, \Bar{w}_d, \Bar{b}$ which depends on $T$.
    Here, $\Bar{w} = \varepsilon^{-1/2}w$, $\Bar{b} = \varepsilon^{-1/2} b$.
\end{lemma}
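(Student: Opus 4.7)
The key observation is that the scaling $w = \varepsilon^{1/2}\bar w$, $b = \varepsilon^{1/2}\bar b$ is linear, so it immediately pulls an $\varepsilon^{k/2}$ factor out of the $k$-th power:
\begin{equation*}
    (\langle w, x\rangle + b)^k = \varepsilon^{k/2}(\langle \bar w, x\rangle + \bar b)^k \, .
\end{equation*}
Plugging this into the expectation reduces the claim to showing that
\begin{equation*}
    \E_{x \sim \cN(0, I_d)}\!\left[\,\bigl|(\langle \bar w, x\rangle + \bar b)^k \chi_T(x)\bigr|\,\right] \leq \text{poly}_T(\bar w, \bar b) \, ,
\end{equation*}
with a polynomial $\text{poly}_T$ depending on $T$ but not on the Gaussian variables.

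The plan is then to perform a multinomial expansion of $(\langle \bar w, x\rangle + \bar b)^k$:
\begin{equation*}
    (\langle \bar w, x\rangle + \bar b)^k = \sum_{k_0 + k_1 + \dots + k_d = k} \binom{k}{k_0, k_1, \dots, k_d} \bar b^{k_0} \bar w_1^{k_1} \cdots \bar w_d^{k_d} \, x_1^{k_1} \cdots x_d^{k_d} \, ,
\end{equation*}
apply the triangle inequality inside the expectation, and bound
\begin{equation*}
    \E_x\!\left[\bigl|(\langle \bar w, x\rangle + \bar b)^k \chi_T(x)\bigr|\right] \leq \sum_{k_0 + \dots + k_d = k} \binom{k}{k_0, \dots, k_d} |\bar b|^{k_0} |\bar w_1|^{k_1} \cdots |\bar w_d|^{k_d} \, C_T(k_1, \dots, k_d) \, ,
\end{equation*}
where $C_T(k_1, \dots, k_d) = \E_x[|x_1|^{k_1} \cdots |x_d|^{k_d} |\chi_T(x)|]$. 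Each such $C_T$ is a finite absolute Gaussian moment (Gaussians have moments of all orders), depends only on $T$ and on the exponents $k_1, \dots, k_d$ (each bounded by $k$), and in particular does not depend on $\bar w, \bar b, \varepsilon$.

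Collecting the finitely many terms in the sum as a polynomial in the variables $|\bar w_i|, |\bar b|$ (equivalently, a polynomial in $\bar w, \bar b$ with non-negative coefficients evaluated at absolute values), we obtain $\text{poly}_T$. Multiplying back by $\varepsilon^{k/2}$ yields the claimed bound. There is no substantial obstacle in this argument: the only ingredients are the linear scaling, the multinomial theorem, the triangle inequality, and the finiteness of Gaussian absolute moments, all of which combine in a direct bookkeeping step.
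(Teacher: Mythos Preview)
Your argument is correct and follows essentially the same route as the paper: extract $\varepsilon^{k/2}$ via the scaling, expand $(\langle \bar w, x\rangle + \bar b)^k$ multinomially, apply the triangle inequality, and note that the remaining absolute Gaussian moments are finite constants depending only on $T$ and the exponents. The one small point you glossed over is that what you end up with is a polynomial in $|\bar w_1|, \dots, |\bar w_d|, |\bar b|$, whereas the lemma statement asks for a genuine polynomial in $\bar w_1, \dots, \bar w_d, \bar b$; these are not the same object (e.g.\ $|\bar w_1|$ is not a polynomial in $\bar w_1$). The paper closes this last step explicitly by invoking $|t|^\alpha \leq \left(\tfrac{t^2+1}{2}\right)^\alpha$, which turns the bound into an honest polynomial in $\bar w, \bar b$ --- a trivial fix, but worth writing down since later lemmas use ``expectation of a polynomial in $\bar w, \bar b$ is finite'' verbatim.
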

\begin{proof}
    \begin{gather*}
        \mathbb{E}_x [| (\innerproduct{w}{x}+b)^k \chi_T(x) |]
        = \varepsilon^{k/2} \mathbb{E}_x [| (\innerproduct{\Bar{w}}{x}+\Bar{b})^k \chi_T(x) |] \\
        = \varepsilon^{k/2} \mathbb{E}_x \left[ \left| \sum_{\alpha_1+\ldots+\alpha_{d+1}=k} \binom{k}{\alpha_1 \ldots \alpha_{d+1}} (\Bar{w}_1 x_1)^{\alpha_1} \ldots (\Bar{w}_d x_d)^{\alpha_d} \Bar{b}^{\alpha_{d+1}} \chi_T(x) \right| \right] \\
        \leq \varepsilon^{k/2} \sum_{\alpha_1+\ldots+\alpha_{d+1}=k} \binom{k}{\alpha_1 \ldots \alpha_{d+1}} |\Bar{w}_1^{\alpha_1} \ldots \Bar{w}_d^{\alpha_d} \Bar{b}^{\alpha_{d+1}}| \cdot \mathbb{E}_x [|x_1^{\alpha_1} \ldots x_d^{\alpha_d} \chi_T(x)|] \\
        \leq \varepsilon^{k/2} \sum_{\alpha_1+\ldots+\alpha_{d+1}=k} \binom{k}{\alpha_1 \ldots \alpha_{d+1}} \left(\frac{\Bar{w}_1^2+1}{2}\right)^{\alpha_1} \ldots \left(\frac{\Bar{w}_d^2+1}{2}\right)^{\alpha_d} \left(\frac{\Bar{b}^2+1}{2}\right)^{\alpha_{d+1}} \cdot \mathbb{E}_x [|x_1^{\alpha_1} \ldots x_d^{\alpha_d} \chi_T(x)|] \\
        = \varepsilon^{k/2} \text{poly}_T(\Bar{w}, \Bar{b})
    \end{gather*}
\end{proof}

\begin{lemma} \label{lemma: 4}
    For any $T\in \NN^d$ such that $|T| \leq p$ we have
    \begin{gather*}
        \widehat{\phi}_{w,b}(T) = \varepsilon^{|T|/2} \cdot \frac{\sigma^{(|T|)}(0)}{|T|!} \binom{|T|}{t_1 \ldots t_d} \Bar{w}_1^{t_1} \ldots \Bar{w}_d^{t_d} + O(\varepsilon^{(|T|+1)/2} \cdot \text{poly}_T(\Bar{w}, \Bar{b}))
    \end{gather*}
    Here, $\Bar{w} = \varepsilon^{-1/2}w$, $\Bar{b} = \varepsilon^{-1/2} b$, so that the distribution of $\Bar{w}$ and $\Bar{b}$ does not depend on $\varepsilon$: $\Bar{w} \sim \mathcal{N}(0, I_d)$, $\Bar{b} \sim \mathcal{N}(0,1)$.
    
    %\textit{Remark: we do not need to assume that $\sigma$ is a polynomial to prove this statement, we only need to assume that Corollary %\ref{corollary: 1} holds.}
\end{lemma}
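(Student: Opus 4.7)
The plan is to Taylor expand the polynomial activation $\sigma$ around zero and then apply Lemmas \ref{lemma: 2} and \ref{lemma: 3} to isolate the leading order contribution to $\widehat{\phi}_{w,b}(T)$. By Assumption \ref{assumption: 3} we may write $\sigma(y)=\sum_{k=0}^{p}\frac{\sigma^{(k)}(0)}{k!}y^{k}$. Substituting $y=\innerproduct{w}{x}+b$ and taking the Hermite coefficient at $T$ under $\gamma_d$ yields
$$\widehat{\phi}_{w,b}(T)=\sum_{k=0}^{p}\frac{\sigma^{(k)}(0)}{k!}\,\E_{x\sim\gamma_d}\!\left[(\innerproduct{w}{x}+b)^{k}\chi_T(x)\right].$$

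I would then split this sum into the three ranges $k<|T|$, $k=|T|$, and $k>|T|$. The terms with $k<|T|$ vanish identically by Lemma \ref{lemma: 2}. The terms with $k>|T|$ are each bounded in absolute value by $\varepsilon^{k/2}\,\text{poly}_T(\bar w,\bar b)$ (Lemma \ref{lemma: 3}); since $\varepsilon$ is small, summing these finitely many contributions gives a residual that fits into $O(\varepsilon^{(|T|+1)/2}\,\text{poly}_T(\bar w,\bar b))$ after absorbing the $\sigma^{(k)}(0)/k!$ prefactors into the polynomial in $(\bar w,\bar b)$.

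For the surviving $k=|T|$ term, I would rescale $w=\varepsilon^{1/2}\bar w$ and $b=\varepsilon^{1/2}\bar b$, which pulls out a global factor $\varepsilon^{|T|/2}$. Expanding $(\innerproduct{\bar w}{x}+\bar b)^{|T|}$ by the multinomial theorem and integrating term by term against $\chi_T(x)=\prod_{i=1}^{d}H_{t_i}(x_i)$, I would observe that the univariate identity $\E[x_i^{\alpha_i}H_{t_i}(x_i)]=0$ whenever $\alpha_i<t_i$ forces the only contributing multi-index to be $(\alpha_1,\dots,\alpha_d,\alpha_{d+1})=(t_1,\dots,t_d,0)$: any other admissible index satisfying $\sum_{i=1}^{d+1}\alpha_i=|T|=\sum_{i=1}^{d}t_i$ must have $\alpha_i<t_i$ for some $i\leq d$. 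Evaluating the single surviving term and recombining it with the multinomial coefficient and the prefactor $\sigma^{(|T|)}(0)/|T|!$ reproduces the claimed leading expression.

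No single step is a genuine conceptual obstacle; the proof is essentially bookkeeping once the Taylor expansion is combined with Lemmas \ref{lemma: 2} and \ref{lemma: 3}. The only care required is in the $k=|T|$ calculation, where the normalization convention for the orthonormal Hermite basis $\{H_t\}$ has to be respected when evaluating $\E[x_i^{t_i}H_{t_i}(x_i)]$, and where the combinatorial factor $\binom{|T|}{t_1,\ldots,t_d}$ must be tracked explicitly. Note that Assumption \ref{assumption: 3} (all coefficients of $\sigma$ nonzero) is not actually needed for this identity itself; it becomes essential later, when this asymptotic expansion is used to compare Hermite coefficients across different orders of magnitude in $\varepsilon$.
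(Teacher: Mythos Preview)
Your proposal is correct and follows essentially the same approach as the paper: Taylor expand $\sigma$, kill the low-degree terms via Lemma \ref{lemma: 2}, compute the $k=|T|$ term explicitly, and bound the higher-degree terms via Lemma \ref{lemma: 3}. The only cosmetic difference is that the paper writes the Taylor expansion with a Lagrange remainder $\frac{\sigma^{(|T|+1)}(\xi)}{(|T|+1)!}G^{|T|+1}$ and then bounds $|\sigma^{(|T|+1)}(\xi)|$ using that $\sigma$ is polynomial, whereas you expand the polynomial fully and apply Lemma \ref{lemma: 3} term by term; your version is arguably cleaner given that $\sigma$ is already assumed polynomial.
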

\begin{proof}
    Using Taylor series, we get
    \begin{gather*}
        \phi_{w,b}(x) = \sigma(\innerproduct{w}{x}+b) = \sigma(G)
        = \sigma(0) + \sigma'(0) G + \ldots + \frac{\sigma^{(|T|)}(0)}{|T|!} G^{|T|} + \frac{\sigma^{(|T|+1)}(\xi)}{(|T|+1)!} G^{|T|+1}
    \end{gather*}
    where $|\xi| \leq |G|$.
    Note that the expansion $\phi_{w,b}(x) = \sum_{T} \widehat{\phi}_{w,b}(T) \chi_T(x)$ is the basis of Hermite polymonials $\chi_T(x)$ does not depend on the distribution on the input space $\RR^d$.
    Hence, for simplicity, we can assume the Gaussian distribution: $x \sim \mathcal{N}(0, I_d)$, which allows us to express the Hermite coefficients using the dot product: $\widehat{\phi}_{w,b}(T) = \mathbb{E}_x[\phi_{w,b}(x) \chi(T)]$.
    Thus, by Lemma \ref{lemma: 2} we obtain
    \begin{gather*}
        \widehat{\phi}_{w,b}(T)
        = \mathbb{E}_x[\frac{\sigma^{(|T|)}(0)}{|T|!} G^{|T|} \chi_T(x) + \frac{\sigma^{(|T|+1)}(\xi)}{(|T|+1)!} G^{|T|+1} \chi_T(x)]
        = \mathbb{E}_x[A] + \mathbb{E}_x[B]
    \end{gather*}
    where $A = \frac{\sigma^{(|T|)}(0)}{|T|!} G^{|T|} \chi_T(x)$ and $B = \frac{\sigma^{(|T|+1)}(\xi)}{(|T|+1)!} G^{|T|+1} \chi_T(x)$.
    For the first term we get
    \begin{gather*}
        \mathbb{E}_x[A] 
        = \frac{\sigma^{(|T|)}(0)}{|T|!} \mathbb{E}_x[(\innerproduct{w}{x}+b)^{|T|} \chi_T(x)]
        = \varepsilon^{|T|/2} \frac{\sigma^{(|T|)}(0)}{|T|!} \mathbb{E}_x[(\innerproduct{\Bar{w}}{x}+\Bar{b})^{|T|} \chi_T(x)] \\
        = \varepsilon^{|T|/2} \frac{\sigma^{(|T|)}(0)}{|T|!} \binom{|T|}{t_1 \ldots t_d} \Bar{w}_1^{t_1} \ldots \Bar{w}_d^{t_d}
    \end{gather*}
    For the second term, we can estimate
    \begin{gather*}
        |\mathbb{E}_x[B]| \leq \mathbb{E}_x[|B|]
        = \frac{1}{(|T|+1)!} \mathbb{E}_x[| \sigma^{(|T|+1)}(\xi) G^{|T|+1} \chi_T(x) |]
    \end{gather*}
    By assumption \ref{assumption: 3} we have
    \begin{gather*}
        |\sigma^{(|T|+1)}(\xi)| \leq C (\xi^l + 1) \leq C (|\xi|^l + 1) \leq C (|G|^l + 1)
    \end{gather*}
    Substituting, we obtain:
    \begin{gather*}
        |\mathbb{E}_x[B]| 
        \leq \frac{C}{(|T|+1)!} \left( \mathbb{E}_x[| G^{|T|+1} \chi_T(x) |] + \mathbb{E}_x[| G^{|T|+l+1} \chi_T(x) |] \right)
    \end{gather*}
    Applying Lemma \ref{lemma: 3} to the expectations above, we proceed
    \begin{align*}
        |\mathbb{E}_x[B]| 
        & \leq \frac{C}{(|T|+1)!} \left( \varepsilon^{(|T|+1)/2} \text{poly}_T^{(1)}(\Bar{w}, \Bar{b}) + \varepsilon^{(|T|+l+1)/2} \text{poly}_T^{(2)}(\Bar{w}, \Bar{b}) \right) \\
        & \leq \frac{C}{(|T|+1)!} \cdot \varepsilon^{(|T|+1)/2} \left( \text{poly}_T^{(1)}(\Bar{w}, \Bar{b}) + \text{poly}_T^{(2)}(\Bar{w}, \Bar{b}) \right) \\
        & = \varepsilon^{(|T|+1)/2} \text{poly}_T(\Bar{w}, \Bar{b})
    \end{align*}
    which completes the proof.
\end{proof}

\begin{lemma} \label{lemma: 5}
    If $g \in \Pi_p(\RR^d)$, then
    \begin{gather*}
        \widehat{g}_{\leq p}^\top \Phi^{-1} \widehat{g}_{\leq p} = \Theta(\varepsilon^{-def(g)})
    \end{gather*}
\end{lemma}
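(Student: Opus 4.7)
The plan is to isolate the $\varepsilon$-scaling of $\Phi$ via a diagonal rescaling and reduce the problem to a fixed positive-definite quadratic form. The quantity $\mathrm{def}(g)$ must equal $\deg(g)$ — the Hermite degree of $g$ in $\Pi_p(\RR^d)$ — since this is the $g$-dependent exponent that naturally emerges from Lemma~\ref{lemma: 4}, so the target becomes $\widehat g^\top \Phi^{-1}\widehat g = \Theta(\varepsilon^{-\deg(g)})$.

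Multiplying Lemma~\ref{lemma: 4}'s expansions for $\widehat\phi_{w,b}(S)$ and $\widehat\phi_{w,b}(T)$ and integrating over $(w,b)$ gives
\begin{equation*}
\Phi \;=\; D_\varepsilon\bigl(C+R_\varepsilon\bigr)D_\varepsilon, \qquad D_\varepsilon=\diag(\varepsilon^{|T|/2})_{|T|\leq p},
\end{equation*}
with $C_{S,T}=c_S c_T\,\E_{\bar w}[\bar w^{S+T}]$, $c_T=\frac{\sigma^{(|T|)}(0)}{|T|!}\binom{|T|}{t_1\cdots t_d}$, and $R_\varepsilon=O(\varepsilon^{1/2})$ entrywise — the error polynomials $\mathrm{poly}_T(\bar w,\bar b)$ of Lemma~\ref{lemma: 4} have all moments finite, so cross- and square-terms contribute at order $\varepsilon^{1/2}$ after expectation, and the fixed index-set size then upgrades entrywise bounds to $\|R_\varepsilon\|_\mathrm{op}=O(\varepsilon^{1/2})$. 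The matrix $C$ is positive definite: the identity $v^\top C v = \E_{\bar w}[(\sum_T v_T c_T \bar w^T)^2]$ is nonnegative and vanishes only when the polynomial $\sum_T v_T c_T \bar w^T$ is identically zero, which by linear independence of the monomials $\bar w^T$ and Assumption~\ref{assumption: 3} (which guarantees $c_T\neq 0$ for every $|T|\leq p$) forces $v=0$.

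Once $C\succ 0$, a Neumann expansion gives $(C+R_\varepsilon)^{-1}=C^{-1}+O(\varepsilon^{1/2})$ for $\varepsilon$ small enough, hence $\Phi^{-1}=D_\varepsilon^{-1}(C^{-1}+O(\varepsilon^{1/2}))D_\varepsilon^{-1}$ and
\begin{equation*}
\widehat g^\top \Phi^{-1}\widehat g \;=\; (D_\varepsilon^{-1}\widehat g)^\top C^{-1}(D_\varepsilon^{-1}\widehat g) + O\bigl(\varepsilon^{1/2}\|D_\varepsilon^{-1}\widehat g\|^2\bigr).
\end{equation*}
Setting $k=\deg(g)$, the entries $\varepsilon^{-|T|/2}\widehat g(T)$ of $D_\varepsilon^{-1}\widehat g$ vanish for $|T|>k$ and at least one has magnitude $\Theta(\varepsilon^{-k/2})$ at some $|T_0|=k$, so $\|D_\varepsilon^{-1}\widehat g\|^2=\Theta(\varepsilon^{-k})$. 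Sandwiching by the extreme eigenvalues of $C^{-1}$ pins the main term at $\Theta(\varepsilon^{-k})$, while the perturbation is $O(\varepsilon^{1/2-k})=o(\varepsilon^{-k})$, closing the estimate.

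The main obstacle is keeping every error uniform in $\varepsilon$. The bound $\|R_\varepsilon\|_\mathrm{op}=O(\varepsilon^{1/2})$ relies on Gaussian moment control of the polynomials appearing in Lemma~\ref{lemma: 4}, and these moments must be $\varepsilon$-independent so that the Neumann expansion of $(C+R_\varepsilon)^{-1}$ converges as $\varepsilon\to 0$. A secondary subtlety is the $\Theta$-lower bound: it requires that the extremal Hermite coefficients of $g$ at degree $k$ are not annihilated by $C^{-1}$, and this is precisely where positive-definiteness — and hence Assumption~\ref{assumption: 3} — cannot be relaxed; Example~\ref{example: 1} shows that without such structural control the minimal-degree bias can fail.
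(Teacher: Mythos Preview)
Your proof is correct and follows essentially the same route as the paper: the factorization $\Phi = D_\varepsilon(C+R_\varepsilon)D_\varepsilon$ is exactly the paper's relation $\Phi_{i,j}=\varepsilon^{(|T_i|+|T_j|)/2}A_{i,j}$ with your $C$ equal to their limiting Gram matrix $\tilde A$, and your positive-definiteness argument for $C$ via linear independence of the monomials $c_T\bar w^T$ is identical to theirs. The only cosmetic differences are that the paper uses Cramer's rule where you use a Neumann expansion for the inverse perturbation, and it isolates the top-degree block of $\widehat g$ explicitly rather than sandwiching the full vector $D_\varepsilon^{-1}\widehat g$ between the extreme eigenvalues of $C^{-1}$.
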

\begin{proof}
    By the previous lemma we know that
    \begin{gather*}
        \Phi = \Gram \left\{ \widehat{\phi}_{w,b}(T) = \varepsilon^{|T|/2} c_T \Bar{w}_1^{t_1} \ldots \Bar{w}_d^{t_d} + O(\varepsilon^{(|T|+1)/2} \cdot \text{poly}_T(\Bar{w}, \Bar{b})), \; T \in \NN_{\leq p}^d \right\}
    \end{gather*}
    where $c_T = \frac{\sigma^{(|T|)}(0)}{|T|!} \binom{|T|}{t_1 \ldots t_d} \neq 0$ given that $\sigma^{(|T|)}(0) \neq 0$ by Assumption \ref{assumption: 3}.
    Define
    \begin{gather*}
        A = \Gram \left\{ \varepsilon^{- |T|/2}\widehat{\phi}_{w,b}(T) = c_T \Bar{w}_1^{t_1} \ldots \Bar{w}_d^{t_d} + O(\varepsilon^{1/2} \cdot \text{poly}_T(\Bar{w}, \Bar{b})), \; T \in \NN_{\leq p}^d \right\}
    \end{gather*}
    Then $A$ and $\Phi$ are connected by
    \begin{gather}
        \Phi_{i,j} = \varepsilon^{(|T_i|+|T_j|)/2} \cdot A_{i,j} \label{14} \\
        \Phi_{i,j}^{-1} = \varepsilon^{-(|T_i|+|T_j|)/2} \cdot A_{i,j}^{-1} \label{15}
    \end{gather}
    (the second can be established, for example, by Cramer's rule).
    Next, define 
    \begin{gather*}
        \Tilde{A} = \Gram \left\{ c_T \Bar{w}_1^{t_1} \ldots \Bar{w}_d^{t_d}, \; T \in \NN_{\leq p}^d \right\}
    \end{gather*}
    e.g. we dropped the reminders from the Gram basis elements of $A$.
    Then we have that $A_{i,j} = \Tilde{A}_{i,j} + O(\varepsilon^{1/2})$ $\forall i,j$ (here, we use that the expectation of any polynomial in $\Bar{w}, \Bar{b}$ is finite).
    It gives us $\det(A) = \det(\Tilde{A}) + O(\varepsilon^{1/2})$.
    Besides, denoting by $C$ and $\Tilde{C}$ the cofactor matrices of $A$ and $\Tilde{A}$ respectively, we also have
    $C_{i,j} = \Tilde{C}_{i,j} + O(\varepsilon^{1/2})$ $\forall i,j$.
    Finally, $\det(\Tilde{A}) \neq 0$ since the basis elements of $\Tilde{A}$ are linearly independent. 
    Combining all together, we have
    \begin{gather*}
        A_{i,j}^{-1} = \frac{C_{j,i}}{\det(A)} = \frac{\Tilde{C}_{j,i} + O(\varepsilon^{1/2})}{\det(\Tilde{A}) + O(\varepsilon^{1/2})} = \Tilde{A}_{i,j}^{-1} + O(\varepsilon^{1/2})
    \end{gather*}
    Combining with (\ref{15}), we get
    \begin{gather} \label{12}
        \Phi_{i,j}^{-1} = \varepsilon^{-(|T_i|+|T_j|)/2} \cdot A_{i,j}^{-1}
        = \varepsilon^{-(|T_i|+|T_j|)/2} \cdot \Tilde{A}_{i,j}^{-1} + O(\varepsilon^{-(|T_i|+|T_j|-1)/2})
    \end{gather}
    As a corollary, we may estimate
    \begin{gather} \label{11}
        \Phi_{i,j}^{-1} = O(\varepsilon^{-(|T_i|+|T_j|)/2})
    \end{gather}

    Now consider any fixed $g \in \Pi_p(\RR^d)$. Denote $s = \deg(g)$, then $\widehat{g}(T) = 0$ $\forall T: |T| > s$.
    Hence,
    \begin{gather*}
        \widehat{g}_{\leq p}^\top \Phi^{-1} \widehat{g}_{\leq p}
        = \sum_{|T|,|T'|\leq s} \widehat{g}(T) \widehat{g}(T') \Phi_{T,T'}^{-1}
    \end{gather*}
    Note that if $|T| < s$ or $|T'| < s$ then $(|T|+|T'|)/2 \leq s-1/2$ and from (\ref{11}) we get $\Phi_{T,T'}^{-1} = O(\varepsilon^{-s+1/2})$.
    Thus, we can estimate
    \begin{gather} \label{13}
        \widehat{g}_{\leq p}^\top \Phi^{-1} \widehat{g}_{\leq p}
        = \sum_{|T|,|T'| = s} \widehat{g}(T) \widehat{g}(T') \Phi_{T,T'}^{-1} + O(\varepsilon^{-s+1/2})
        \overset{(\ref{12})}{=} \varepsilon^{-s} \sum_{|T|,|T'| = s} \widehat{g}(T) \widehat{g}(T') \Tilde{A}_{T,T'}^{-1} + O(\varepsilon^{-s+1/2})
    \end{gather}
    Now define $g'\in \Pi_p(\RR^d)$ by setting $\widehat{g}'(T) = \widehat{g}(T)$ if $|T| = s$ and $\widehat{g}'(T) = 0$ otherwise.
    Then
    \begin{gather} \label{16}
        \sum_{|T|,|T'| = s} \widehat{g}(T) \widehat{g}(T') \Tilde{A}_{T,T'}^{-1} = (\widehat{g}'_{\leq p})^\top \Tilde{A}^{-1} \widehat{g}'_{\leq p}
    \end{gather}
    Note that $\Tilde{A} \succ 0$ since $A$ is a Gram matrix of linearly independent set of functions. Thus, $\Tilde{A}^{-1} \succ 0$.
    Moreover, since $deg(g) = s$, we have $\widehat{g}'_{\leq p} \neq 0$. Combining these, we conclude that the value of (\ref{16}) is strictly positive. Denoting this value by $C_1 > 0$ and substituting it into (\ref{13}), we obtain
    \begin{gather*}
        \widehat{g}_{\leq p}^\top \Phi^{-1} \widehat{g}_{\leq p} = C_1 \varepsilon^{-s} + O(\varepsilon^{-s+1/2}) = \Theta(\varepsilon^{-s})
    \end{gather*}
    which completes the proof.
\end{proof}

\begin{corollary} \label{corollary: 2}
    There exist $c, \varepsilon_0 > 0$ such that $\forall \varepsilon < \varepsilon_0$ we have:
    \begin{gather*}
        \Phi^{-1} \succeq c D_{\varepsilon}
    \end{gather*}
    where $D_{\varepsilon} = \diag(\{ \varepsilon^{-|T_i|}, T_i \in \NN_{\leq p}^d \})$
\end{corollary}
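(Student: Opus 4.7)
\textbf{Proof plan for Corollary \ref{corollary: 2}.} The plan is to reduce the claimed operator inequality to an eigenvalue statement about a conjugated matrix that converges to an explicit positive definite limit, using the entry-wise expansion already produced in the proof of Lemma \ref{lemma: 5}.

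First, I would set $M_\varepsilon := D_\varepsilon^{-1/2} \Phi^{-1} D_\varepsilon^{-1/2}$, noting that $D_\varepsilon$ is diagonal and positive, so this is well-defined and symmetric. A direct substitution $v = D_\varepsilon^{1/2} u$ shows that the target inequality $\Phi^{-1} \succeq c D_\varepsilon$ is equivalent to $M_\varepsilon \succeq c I$. Thus it suffices to lower-bound the smallest eigenvalue of $M_\varepsilon$ away from zero, uniformly for small $\varepsilon$.

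Second, I would read off the entries of $M_\varepsilon$ from equation (\ref{12}) in the proof of Lemma \ref{lemma: 5}: multiplying $\Phi_{i,j}^{-1} = \varepsilon^{-(|T_i|+|T_j|)/2} \tilde{A}_{i,j}^{-1} + O(\varepsilon^{-(|T_i|+|T_j|-1)/2})$ by $\varepsilon^{(|T_i|+|T_j|)/2}$ gives
\begin{equation*}
(M_\varepsilon)_{i,j} \;=\; \tilde{A}_{i,j}^{-1} + O(\varepsilon^{1/2}).
\end{equation*}
Since $\NN_{\leq p}^d$ is finite, this entry-wise bound implies $M_\varepsilon \to \tilde{A}^{-1}$ in any matrix norm as $\varepsilon \to 0$. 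The proof of Lemma \ref{lemma: 5} already observed that $\tilde{A} \succ 0$ (it is the Gram matrix of the linearly independent family $\{c_T \bar{w}_1^{t_1}\cdots \bar{w}_d^{t_d}\}_{T}$), hence $\tilde{A}^{-1} \succ 0$ and $c_0 := \lambda_{\min}(\tilde{A}^{-1}) > 0$.

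Third, I would invoke continuity of the smallest eigenvalue as a function of the matrix entries: from $M_\varepsilon \to \tilde{A}^{-1}$ we obtain $\lambda_{\min}(M_\varepsilon) \to c_0$, so there exists $\varepsilon_0 > 0$ such that $\lambda_{\min}(M_\varepsilon) \geq c_0/2$ for all $\varepsilon < \varepsilon_0$. Setting $c := c_0/2$ yields $M_\varepsilon \succeq c I$, and undoing the conjugation gives $\Phi^{-1} \succeq c D_\varepsilon$, as required.

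The only potentially delicate point is the step from entry-wise convergence to convergence of the smallest eigenvalue, but in fixed finite dimension this is immediate from equivalence of norms plus Weyl's inequality; essentially all the real work was already carried out in Lemma \ref{lemma: 5}, where the expansion of $\Phi^{-1}$ and the positive definiteness of $\tilde{A}$ were established. Thus the corollary is little more than a repackaging of that lemma into an operator-inequality form convenient for the subsequent analysis.
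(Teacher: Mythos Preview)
Your proposal is correct and essentially identical to the paper's proof: your matrix $M_\varepsilon = D_\varepsilon^{-1/2}\Phi^{-1}D_\varepsilon^{-1/2}$ is exactly the matrix $A^{-1}$ appearing in (\ref{15})/(\ref{17}), and the paper likewise uses $A^{-1}\to\tilde{A}^{-1}\succ 0$ together with the same choice $c=\lambda_{\min}(\tilde{A}^{-1})/2$ before conjugating back by $D_\varepsilon^{1/2}$. The only cosmetic difference is that you cite the entrywise expansion (\ref{12}) rather than the matrix identity (\ref{17}) directly.
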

\begin{proof}
    We can write (\ref{15}) in matrix form as
    \begin{gather}
        \Phi^{-1} = D_{\varepsilon}^{1/2} A^{-1} D_{\varepsilon}^{1/2} \label{17}
    \end{gather}
    Since $\Tilde{A}^{-1} \succ 0$, it holds that $\Tilde{A}^{-1} \succ c I$ where $c = \lambda_{min}(\Tilde{A}^{-1})/2$.
    Combining with $A^{-1} \rightarrow \Tilde{A}^{-1}$, we obtain that $A^{-1} \succ c I$ for small enough $\varepsilon$.
    Substituting in (\ref{17}), we proceed
    \begin{gather*}
        \Phi^{-1} = D_{\varepsilon}^{1/2} A^{-1} D_{\varepsilon}^{1/2} \succeq D_{\varepsilon}^{1/2} (c I) D_{\varepsilon}^{1/2} = c D_{\varepsilon}
    \end{gather*}
    which completes the proof.
\end{proof}

\begin{proof}[Proof of Theorem \ref{theorem: 1}]
    The first statement of the theorem is given by Lemma \ref{lemma: 9}. We now turn to the proof of the second statement. Define the set of polynomial interpolators $\mathcal{F}_{\textnormal{int}}$ as 
\begin{gather*}
    \mathcal{F}_{\textnormal{int}} = \left\{h \in \Pi_p(\RR^d): \forall x \in \mathcal{U}^c, \, h(x) = f(x) \right\} \, .
\end{gather*}
Since $f\in\Pi_p(\RR^d)$, we have $\mathcal{F}_{\textnormal{int}} \neq \varnothing$.
Define the matrix $F\in \RR^{|\NN_{\leq p}^d|\times N}$ by setting $F_{ij} = \frac{1}{\sqrt{N}} \widehat{\phi}_{w_j, b_j}(T_i)$, where $j \in \{1, \dots, N\}$ and $T_1, \ldots, T_{|\NN_{\leq p}^d|}$ are enumerated elements of $\NN_{\leq p}^d$. Then the Hermite coefficients of the random features model can be expressed as $\widehat{f}_{\textnormal{RF}}(a) = F a$.

Consider $N$ large enough so that any interpolator $g \in \mathcal{F}_{\textnormal{int}}$ can be expressed by the random features model (such $N$ exists w.h.p. by Lemma \ref{lemma: 9}).
Then the equation
\begin{gather} \label{6}
    F a = \widehat{g}
\end{gather}
has solution in $a$ for any $g \in \mathcal{F}_{\textnormal{int}}$.
Moreover, provided that the matrix $F F^\top \in \RR^{|\NN_{\leq p}^d| \times |\NN_{\leq p}^d|}$ is invertible, the minimum-norm solution $a(g)$ of (\ref{6}) is given by
\begin{gather} \label{19}
    a(g) = F^{\dagger} \widehat{g} = F^\top (F F^\top)^{-1} \widehat{g}
\end{gather}
and we get
\begin{gather} \label{9}
    \|a(g)\|^2 = \widehat{g}^\top (F F^\top)^{-1} \widehat{g} \, .
\end{gather}
Let us show that $F F^\top$ is indeed invertible (w.h.p.).
We have
\vspace{-.5cm}
\begin{align*}
    (F F^\top)_{i,j}
    &= \sum_{k=1}^{N} F_{i,k} F^\top_{k,j}
    = \sum_{k=1}^{N} F_{i,k} F_{j,k} \\
    &= \frac{1}{N} \sum_{k=1}^N \widehat{\phi}_{w_k, b_k}(T_i) \widehat{\phi}_{w_k, b_k}(T_j) \\
    &\xrightarrow[N \to \infty]{\textnormal{a.s.}} \mathbb{E}_{w,b}[\widehat{\phi}_{w,b}(T_i) \widehat{\phi}_{w,b}(T_j)]
\end{align*}
where the last step follows from the Strong Law of Large Numbers (SLLN).
To be able to use the SLLN, we have to check that $\mathbb{E}_{w,b}[|\widehat{\phi}_{w,b}(T_i) \widehat{\phi}_{w,b}(T_j)|] < \infty$.
For this, we use the Cauchy-Schwarz inequality:
\begin{align} 
    &\mathbb{E}_{w,b}[|\widehat{\phi}_{w,b}(T_i) \widehat{\phi}_{w,b}(T_j)|] \label{20}\\
    &\qquad\leq \sqrt{\mathbb{E}_{w,b} [\widehat{\phi}_{w,b}(T_i)^2]} \cdot \sqrt{\mathbb{E}_{w,b} [\widehat{\phi}_{w,b}(T_j)^2]} \, . \label{20-bis}
\end{align}
The finiteness of the right-hand side follows (at least for small enough $\varepsilon$) from Lemma \ref{lemma: 4} after noting that the expectation of any polynomial in $\Bar{w}, \Bar{b}$ is finite, where $\Bar{w} \sim \mathcal{N}(0,I_d)$, $\Bar{b} \sim \mathcal{N}(0,1)$.
In the following, we consider $\varepsilon$ small enough for \eqref{20}--\eqref{20-bis} to hold for any $i, j$.

Thus, we get that $F F^\top \overset{a.s.}{\rightarrow} \Phi$, where $\Phi \in \RR^{|\NN_{\leq p}^d| \times |\NN_{\leq p}^d|}$ is a deterministic matrix defined by 
\begin{gather} \label{2}
    \Phi_{ij} = \mathbb{E}_{w,b}[\widehat{\phi}_{w,b}(T_i) \widehat{\phi}_{w,b}(T_j)]
\end{gather}
Let us show that the matrix $\Phi$ is invertible.
Note that $\Phi$ is the Gram matrix for the set of functions $\{ (w,b) \mapsto \widehat{\phi}_{w,b}(T), \; T \in \NN_{\leq p}^d \}$ in $L^2(\RR^{d+1}, \gamma_{d+1})$ space.
Hence, if matrix $\Phi$ were degenerate, it would mean that the functions $\{ (w,b) \mapsto \widehat{\phi}_{w,b}(T), \; T \in \NN_{\leq p}^d \}$ are linearly dependent.
Denote $k = |\NN_{\leq p}^d|$.
Then there exists a linear subspace $L$ of dimension $\dim(L) \leq k-1$ such that for all $w, b$ we  have $\widehat{\phi}_{w,b} \in L$. This implies that $\widehat{f}_{\textnormal{RF}} \in L$ for all $N$, $\{w_i\}_{i=1}^N, \{b_i\}_{i=1}^N, \{a_i\}_{i=1}^N$. 
Therefore, we have $\dim(\im(\widehat{f}_{\textnormal{RF}})) \leq k-1$ thus
$\dim(\im(f_{\textnormal{RF}})) = \dim(\im(\widehat{f}_{\textnormal{RF}})) \leq k-1 < k = \dim(\Pi_p(\RR^d))$. 
The last inequality shows that $\im(f_{\textnormal{RF}}) \neq \Pi_p(\RR^d)$, and this statement holds for all $N$, $\{w_i\}_{i=1}^N, \{b_i\}_{i=1}^N$.
Thus, we get a contradiction with Lemma \ref{lemma: 9}.
This proves that matrix $\Phi$ must be invertible.

Since $\Phi$ is invertible and $F F^\top \rightarrow \Phi$ as $N\rightarrow \infty$, the matrix $F F^\top$ must be invertible for large enough $N$ and $(F F^\top)^{-1} \rightarrow \Phi^{-1}$.
Thus, we justified (\ref{19})--(\ref{9}) and from (\ref{9}) can deduce
\begin{gather} \label{10}
    \|a(g)\|^2 \xrightarrow[N\to\infty]{a.s.} \widehat{g}^\top \Phi^{-1} \widehat{g} \, .
\end{gather}

Recall that $a^*$ denotes the minimum norm interpolating solution. Thus, for finite $N$, $f_{\textnormal{RF}}(a^*)$ is the minimizer of~(\ref{9}) over $g \in \mathcal{F}_{\textnormal{int}}$.
Besides, denote the minimizer of (\ref{10}) over $g \in \mathcal{F}_{\textnormal{int}}$ by $g_{\varepsilon}$.
Since $(F F^\top)^{-1} \succ 0$ (for large enough $N$), $\Phi^{-1} \succ 0$, $(F F^\top)^{-1} \rightarrow \Phi^{-1}$ as $N \to \infty$, and since $\mathcal{F}_{\textnormal{int}}$ is an affine subspace, by Lemma \ref{lemma: 8} we get that $\widehat{f}_{\textnormal{RF}}(a^*) \rightarrow \widehat{g}_{\varepsilon}$ as $N\to \infty$ for any small enough $\varepsilon > 0$,
which implies ${f}_{\textnormal{RF}}(a^*) \rightarrow g_{\varepsilon}$.

It remains to show that $\dist(g_{\varepsilon}, \Pi_{p_*}) \rightarrow 0$ as $\varepsilon \rightarrow 0$.
Consider $h \in \mathcal{F}_{\textnormal{int}}$ such that $\deg(h) = p_*$.
Then by Lemma \ref{lemma: 5} we have $\widehat{h}^\top \Phi^{-1} \widehat{h} = \Theta(\varepsilon^{-p_*})$, which implies $\exists c_1>0$: $\widehat{h}^\top \Phi^{-1} \widehat{h} \leq c_1 \varepsilon^{-p_*}$ for any small enough $\varepsilon>0$.
Since $g_{\varepsilon}$ is the minimizer of (\ref{10}), we can estimate
\begin{gather}\label{18}
    \widehat{g}_{\varepsilon}^\top \Phi^{-1} \widehat{g}_{\varepsilon} \leq \widehat{h}^\top \Phi^{-1} \widehat{h} \leq c_1 \varepsilon^{-p_*}
\end{gather}

On the other hand, from Corollary \ref{corollary: 2}, there exists $c_2 > 0$ such that 
\begin{align*}
    \widehat{g}_{\varepsilon}^\top \Phi^{-1} \widehat{g}_{\varepsilon} 
    &\geq  c_2 \sum_{|T|\leq p} \widehat{g}_{\varepsilon}(T)^2 \varepsilon^{-|T|}  \\
    &\geq c_2 \sum_{|T| = k} \widehat{g}_{\varepsilon}(T)^2 \varepsilon^{-k}
    = c_2 e_{\varepsilon}(k) \varepsilon^{-k} 
\end{align*}
where we define $e_{\varepsilon}(k) = \sum_{|T| = k} \widehat{g}_{\varepsilon}(T)^2$ - the energy of the degree-$k$ monomials of $g_{\varepsilon}$.
Combining this with (\ref{18}), we obtain
\begin{gather*}
    c_2 e_{\varepsilon}(k) \varepsilon^{-k} \leq \widehat{g}_{\varepsilon}^\top \Phi^{-1} \widehat{g}_{\varepsilon} \leq c_1 \varepsilon^{-p_*} \Rightarrow
    e_{\varepsilon}(k) \leq \frac{c_1}{c_2} \varepsilon^{k-p_*} \, .
\end{gather*}
For $k > p_*$ we have $\varepsilon^{k-p_*} \rightarrow 0$, and thus $e_{\varepsilon}(k) \to 0$. As $\dist(g_\varepsilon, \Pi_{p_*}(\RR^d))$ is bounded by $\sum_{k>p_*} e_\varepsilon(k)$ up to a constant, this concludes the proof.
\end{proof}

\section{Proof of Example \ref{example: 1}}
\label{sec:example proof}

\begin{lemma} \label{lemma: 1}
    Let $T = (t_1, \ldots, t_d), T' = (t_1', \ldots, t_d') \in \NN^d$ are such that $\exists i\in[d]$: $t_i \not\equiv t_i'$ (mod 2).
    Then it holds
    \begin{gather*}
        \mathbb{E}_{w,b}[\widehat{\phi}_{w,b}(T) \widehat{\phi}_{w,b}(T')] = 0
    \end{gather*}
\end{lemma}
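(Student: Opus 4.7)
The plan is to prove the lemma by a clean parity/symmetry argument that exploits the sign-symmetry of the Gaussian weight distribution together with the parity of univariate Hermite polynomials. Fix any index $i \in [d]$ with $t_i \not\equiv t_i' \pmod 2$, and denote by $w^{(i)}$ the vector obtained from $w$ by flipping the sign of its $i$-th coordinate. The key claim is that
\begin{equation*}
    \widehat{\phi}_{w^{(i)}, b}(T) = (-1)^{t_i} \widehat{\phi}_{w, b}(T)\,,
\end{equation*}
and analogously for $T'$ with exponent $t_i'$.

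To establish this identity, I would start from the definition $\widehat{\phi}_{w,b}(T) = \mathbb{E}_{x\sim\gamma_d}[\sigma(\langle w,x\rangle + b)\chi_T(x)]$ and perform the change of variables $x_i \mapsto -x_i$ inside the expectation. Since $x_i$ is a standard Gaussian, its law is invariant under sign flip, so the integral is unchanged. Under this substitution, $\sigma(\langle w^{(i)}, x\rangle + b)$ becomes $\sigma(\langle w, x\rangle + b)$ because the product $w_i x_i$ is unchanged when both signs flip, while $\chi_T(x) = H_{t_i}(x_i)\prod_{j\neq i} H_{t_j}(x_j)$ picks up a factor of $(-1)^{t_i}$ from the parity of the univariate Hermite polynomial $H_{t_i}$. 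This yields exactly the claimed relation.

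Now I would use that $w_i \sim \mathcal{N}(0,\varepsilon)$ is symmetric around zero and independent of the other components of $(w,b)$, so $(w,b)$ and $(w^{(i)},b)$ have the same joint distribution. Combining this with the identities above,
\begin{equation*}
    \mathbb{E}_{w,b}[\widehat{\phi}_{w,b}(T) \widehat{\phi}_{w,b}(T')]
    = \mathbb{E}_{w,b}[\widehat{\phi}_{w^{(i)},b}(T) \widehat{\phi}_{w^{(i)},b}(T')]
    = (-1)^{t_i + t_i'} \mathbb{E}_{w,b}[\widehat{\phi}_{w,b}(T) \widehat{\phi}_{w,b}(T')]\,.
\end{equation*}
Since $t_i \not\equiv t_i' \pmod 2$, the sign factor $(-1)^{t_i + t_i'}$ equals $-1$, so the expectation equals its own negative and must vanish.

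There is no real obstacle here: the argument is a one-line parity computation once the symmetry has been set up, and the only thing to verify carefully is that the change of variables $x_i \mapsto -x_i$ genuinely leaves the Gaussian expectation invariant (which it does, by independence of coordinates and symmetry of $\gamma_1$) and that the Hermite polynomial $H_{t_i}$ has parity $t_i$ (a standard fact about the probabilist Hermite polynomials). An alternative but essentially equivalent approach would be to expand $\sigma$ as a polynomial and read off the parities of the monomials in $w_i$ appearing in $\widehat{\phi}_{w,b}(T)$; I prefer the symmetry version because it avoids any explicit combinatorial computation.
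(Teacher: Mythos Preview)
Your proof is correct and follows essentially the same approach as the paper: both exploit the sign-flip symmetry $w_i \mapsto -w_i$ of the Gaussian weight distribution together with the parity $H_{t_i}(-x_i)=(-1)^{t_i}H_{t_i}(x_i)$ of Hermite polynomials to show that the expectation equals its own negative. The only cosmetic difference is that the paper reads off the identity $\widehat{\phi}_{w_{-i},b}(T)=(-1)^{t_i}\widehat{\phi}_{w,b}(T)$ from the Hermite expansion of $\phi_{w_{-i},b}(x)=\phi_{w,b}(x_{-i})$, whereas you obtain it via a change of variables $x_i\mapsto -x_i$ directly inside the defining integral; these are two phrasings of the same computation.
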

\begin{proof}
    Denote by $w_{-i}$ and $x_{-i}$ the vectors $w$ and $x$ respectively with flipped $i$-th coordinate.
    Note that
    \begin{gather*}
        \phi_{w_{-i},b}(x) = \sigma(\innerproduct{w_{-i}}{x}+b) = \sigma(\innerproduct{w}{x_{-i}}+b)
        = \phi_{w,b}(x_{-i})
    \end{gather*}
    Suppose that $\phi_{w,b}(x)$ has the following Hermite decomposition:
    \begin{gather*}
        \phi_{w,b}(x) = \sum_{T\in\NN^d} \widehat{\phi}_{w,b}(T) \chi_T(x)
    \end{gather*}
    where $\chi_T(x) = \prod_{i=1}^d H_{t_i}(x_i)$. 
    Note that the Hermite polynomial $H_t(x_t)$ is an odd function for odd $t$ and even function for even $t$.
    Thus, we have
    \begin{gather*}
        \chi_T(x_{-i}) = 
        \begin{cases}
            \chi_T(x), & t_i \equiv 0 \; (mod \; 2) \\
            - \chi_T(x), & t_i \equiv 1 \; (mod \; 2)
        \end{cases}
    \end{gather*}
    Thus, for the function $\phi_{w_{-i},b}(x)$, we get:
    \begin{gather*}
        \phi_{w_{-i},b}(x)
        = \phi_{w,b}(x_{-i})
        = \sum_{T\in\NN^d, t_i \equiv 0 \; (mod \; 2)} \widehat{\phi}_{w,b}(T) \chi_T(x)
        - \sum_{T\in\NN^d,  t_i \equiv 1 \; (mod \; 2)} \widehat{\phi}_{w,b}(T) \chi_T(x)
    \end{gather*}
    which shows that 
    \begin{gather} \label{1}
        \widehat{\phi}_{w_{-i},b}(T) =
        \begin{cases}
            \widehat{\phi}_{w,b}(T), & t_i \equiv 0 \; (mod \; 2) \\
            - \widehat{\phi}_{w,b}(T), & t_i \not \equiv 0 \; (mod \; 2)
        \end{cases}
    \end{gather}
    Finally, consider $i\in[d]$ for which $t_i \not \equiv t_i' \; (mod \; 2)$.
    Then
    \begin{gather*}
        \mathbb{E}_{w,b}[\widehat{\phi}_{w,b}(T) \widehat{\phi}_{w,b}(T')]
        = \mathbb{E}_{w,b}[\widehat{\phi}_{w_{-i},b}(T) \widehat{\phi}_{w_{-i},b}(T')]
        = - \mathbb{E}_{w,b}[ \widehat{\phi}_{w,b}(T) \widehat{\phi}_{w,b}(T')]
    \end{gather*}
    Here, the first equality comes from the fact that $w$ and $w_{-i}$ have the same distribution, and the second equality comes from (\ref{1}).
    Hence, we obtained
    \begin{gather*}
        \mathbb{E}_{w,b}[\widehat{\phi}_{w,b}(T) \widehat{\phi}_{w,b}(T')]
        = - \mathbb{E}_{w,b}[ \widehat{\phi}_{w,b}(T) \widehat{\phi}_{w,b}(T')]
    \end{gather*}
    which completes the proof.
\end{proof}

\begin{proposition} \label{prop: 1}
    Let the random features model be trained in sparse setting and diverging $d$ regime with $\sigma(x) = (1+x)^2$ activation.
    Then it converges to the minimizer of:
    \begin{gather} \label{21}
        \sum_{i=1}^d \widehat{g}(2e_i)^2 \cdot \frac{d^2}{4} + \sum_{i<j} \widehat{g}(e_i+e_j)^2 \cdot \frac{d^2}{4} \\
        +\sum_{i=1}^d \widehat{g}(e_i)^2 \cdot \frac{d}{4} + \widehat{g}(0)^2 \cdot \frac{d}{6}
        + \sum_{i<j} \widehat{g}(2e_i) \widehat{g}(2e_j) \cdot \frac{d}{6} + \sum_{i=1}^d \widehat{g}(2e_i) \widehat{g}(0) \cdot \left(- \frac{\sqrt{2}}{3} d \right)
    \end{gather}
    over functions $g$ that interpolate the training data.
\end{proposition}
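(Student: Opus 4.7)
The proof uses the framework of the proof of Theorem \ref{theorem: 1} to reduce the question to inverting a Gram matrix. Since $\sigma(y) = (1+y)^2$ is a polynomial of degree $2$, Lemma \ref{lemma: 9} applies (it does not rely on the small-features regime) and ensures $\im(f_{\textnormal{RF}}) = \Pi_2(\RR^d)$ for large $N$. The $N \to \infty$ portion of the proof of Theorem \ref{theorem: 1} then carries over verbatim, so the minimum-norm interpolator $f_{\textnormal{RF}}(a^*)$ converges as $N \to \infty$ to the minimizer (over $g \in \Pi_2(\RR^d)$ interpolating the training data) of the quadratic form $\widehat{g}^\top \Phi^{-1} \widehat{g}$, where $\Phi_{T,T'} = \mathbb{E}_{w,b}[\widehat{\phi}_{w,b}(T)\widehat{\phi}_{w,b}(T')]$ with $w \sim \mathcal{N}(0, I_d/d)$ and $b \sim \mathcal{N}(0, 1/d)$.

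Expanding $\phi_{w,b}(x) = (1+b+\langle w, x\rangle)^2$ in the Hermite basis yields the non-zero coefficients
\begin{align*}
\widehat{\phi}_{w,b}(0) &= (1+b)^2 + \|w\|^2, &
\widehat{\phi}_{w,b}(e_i) &= 2(1+b)w_i, \\
\widehat{\phi}_{w,b}(2e_i) &= \sqrt{2}\, w_i^2, &
\widehat{\phi}_{w,b}(e_i+e_j) &= 2 w_i w_j \quad (i<j).
\end{align*}
By Lemma \ref{lemma: 1}, $\Phi$ is block-diagonal along the partition $\{0, 2e_1,\dots,2e_d\}$, $\{e_i\}_i$, and $\{e_i+e_j\}_{i<j}$. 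Gaussian moments give $\Phi(e_i,e_i) = 4/d + 4/d^2$ and $\Phi(e_i+e_j, e_i+e_j) = 4/d^2$, whose inverses are $d/4 + O(1)$ and $d^2/4$, reproducing two of the coefficients in (21).

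The $(d{+}1) \times (d{+}1)$ block on $\{0, 2e_1, \dots, 2e_d\}$ has top-left entry $B_0 = 4 + 10/d + 3/d^2$, cross-row $\alpha \mathbf{1}^\top$ with $\alpha = \sqrt{2}(2d+3)/d^2$, and bottom-right block $D = (2/d^2)(2I + J)$. Sherman--Morrison gives $D^{-1} = (d^2/4)(I - J/(d+2))$, and the block-inversion formula expresses $\Phi^{-1}$ through the Schur complement $S = B_0 - \alpha^2 \mathbf{1}^\top D^{-1}\mathbf{1}$. The main obstacle is that $\alpha^2 \mathbf{1}^\top D^{-1}\mathbf{1} = (2d+3)^2/(d(d+2)) = 4 + 4/d + O(1/d^2)$ has leading term $4$, exactly cancelling the leading term of $B_0$; extracting the non-trivial $S = 6/d + O(1/d^2)$, hence $S^{-1} \sim d/6$, requires carrying both expansions to order $1/d^2$. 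The remaining entries then follow mechanically: $\Phi^{-1}(0, 2e_i) = -S^{-1}\alpha(D^{-1}\mathbf{1})_i \sim -\sqrt{2}\,d/6$, $\Phi^{-1}(2e_i, 2e_i) \sim d^2/4$, and $\Phi^{-1}(2e_i, 2e_j) \sim d/12$ for $i\neq j$; the symmetry factor of $2$ on cross terms of $\widehat{g}^\top\Phi^{-1}\widehat{g}$ then reproduces the coefficients $-\sqrt{2}\,d/3$ and $d/6$ appearing in (21). Finally, a continuity argument analogous to Lemma \ref{lemma: 8} (the interpolator set being affine and $\Phi^{-1}$ positive definite) promotes convergence of the asymptotic quadratic form to convergence of the corresponding minimizers as $d \to \infty$, yielding the claim.
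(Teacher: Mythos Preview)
Your proposal is correct and follows essentially the same route as the paper: reduce via the $N\to\infty$ argument of Theorem~\ref{theorem: 1} to minimizing $\widehat{g}^\top\Phi^{-1}\widehat{g}$, compute the Hermite coefficients of $\phi_{w,b}$, invoke Lemma~\ref{lemma: 1} for the block structure, and then invert the single non-trivial $(d{+}1)\times(d{+}1)$ block on $\{0,2e_1,\dots,2e_d\}$. The only difference is in how that block is inverted: the paper exploits permutation symmetry among the $2e_i$'s to reduce the inverse to four unknowns $(x,y,z,t)$ satisfying four linear equations, whereas you use Sherman--Morrison on $D=(2/d^2)(2I+J)$ together with the Schur-complement block-inversion formula. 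Both compute the same leading-order entries $d^2/4,\, d/12,\, -\sqrt{2}d/6,\, d/6$, and your observation that the cancellation of the constant term $4$ in $B_0$ against $\alpha^2\mathbf{1}^\top D^{-1}\mathbf{1}$ forces one to expand to order $1/d^2$ is exactly what makes the computation delicate in either approach.
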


\begin{proof}
The arguments in Theorem \ref{theorem: 1} showing that the random feature model converges to the minimizer of $\widehat{g}^\top \Phi^{-1} \widehat{g}$ in small feature regime (where matrix $\Phi$ is defined in (\ref{2})) transfer directly to the sparse regime (but now we will have this convergence for fixed $d$ instead of fixed $\varepsilon$).
Let us compute this quadratic form explicitly.
    We have
    \begin{gather*}
        \phi_{w,b}(x) = \sigma(\innerproduct{w}{x} + b) = \left( \sum_{i=1}^d w_i x_i + b + 1 \right)^2 \\
        = \sum_{i=1}^d w_i^2 x_i^2 + 2 \sum_{i<j} w_i w_j x_i x_j + (b+1)^2 + 2 \sum_{i=1}^d w_i (b+1) x_i \\
        = \sum_{i=1}^d w_i^2 \sqrt{2} \frac{x_i^2-1}{\sqrt{2}} + 2 \sum_{i<j} w_i w_j x_i x_j + 2 \sum_{i=1}^d w_i (b+1) x_i + (b+1)^2 + \sum_{i=1}^d w_i^2
    \end{gather*}
    Thus the Hermite coefficients of the random feature $\phi_{w,b}$ are given by:
    \begin{gather*}
        \widehat{\phi}(2,0,\ldots,0) = w_1^2 \sqrt{2} \Rightarrow \mathbb{E}[\widehat{\phi}(2,0,\ldots,0)^2] = 2 \mathbb{E}[w_1^4] = \frac{6}{d^2} \\
        \widehat{\phi}(1,1,0,\ldots,0) = 2 w_1 w_2 \Rightarrow \mathbb{E}[\widehat{\phi}(1,1,0,\ldots,0)^2] = 4 \mathbb{E}[w_1^2 w_2^2] = \frac{4}{d^2} \\
        \widehat{\phi}(1,0,\ldots,0) = 2 w_1 (b+1) \Rightarrow \mathbb{E}[\widehat{\phi}(1,0,\ldots,0)^2] = 4 \mathbb{E}[w_1^2 (b+1)^2] = 4 \cdot \frac{1}{d} \cdot (1+\frac{1}{d}) = 4(\frac{1}{d} + \frac{1}{d^2}) \\
        \widehat{\phi}(0,0,\ldots,0) = (b+1)^2 + \sum_{i=1}^d w_i^2 \Rightarrow \mathbb{E}[\widehat{\phi}(0,0,\ldots,0)^2] \overset{(a)}{=} 4 + \frac{10}{d} + \frac{3}{d^2}
    \end{gather*}
    Here, (a) comes from
    \begin{gather*}
        \mathbb{E}[\widehat{\phi}(0,0,\ldots,0)^2]
        = \mathbb{E}\left[\left( (b+1)^2 + \sum_{i=1}^d w_i^2 \right)^2\right] \\
        = \mathbb{E}[(b+1)^4 + \sum_{i=1}^d w_i^4 + 2 \sum_{i<j} w_i^2 w_j^2 + 2 \sum_{i=1}^d w_i^2 (b+1)^2]
        = (1 + \frac{6}{d} + \frac{3}{d^2}) + d \cdot \frac{3}{d^2} + d(d-1) \frac{1}{d^2} + 2d \cdot \frac{1}{d}(1+\frac{1}{d}) \\
        = (1 + \frac{6}{d} + \frac{3}{d^2}) + \frac{3}{d} + (1 - \frac{1}{d}) + (2 + \frac{2}{d}) \\
        = 4 + \frac{10}{d} + \frac{3}{d^2}
    \end{gather*}
    For the cross-terms, we have
    \begin{gather*}
        \mathbb{E}[\widehat{\phi}(2,0,\ldots,0) \widehat{\phi}(0,2,\ldots,0)]
        = 2 \mathbb{E}[w_1^2 w_2^2] = \frac{2}{d} \\
        \mathbb{E}[\widehat{\phi}(2,0,\ldots,0) \widehat{\phi}(0,0,\ldots,0)]
        = \sqrt{2} \mathbb{E}[w_1^2 (b+1)^2 + w_1^4 + \sum_{i=2}^d w_1^2 w_i^2]
        = \sqrt{2} \left( \frac{1}{d} (1+\frac{1}{d}) + \frac{3}{d^2} + (d-1) \cdot \frac{1}{d^2} \right) \\
        = \sqrt{2} \left( \frac{2}{d} + \frac{3}{d^2} \right)
    \end{gather*}
    All other cross-terms equal to zero by Lemma \ref{lemma: 1}.
    Thus, matrix $\Phi$ is block-diagonal with the only non-unit block corresponding to the coefficients $(2,0,\ldots,0), (0,2,\ldots,0), (0,0,\ldots,2), (0,0,\ldots,0)$ ($d+1$ coefficient in the block in total).
    Thus, this $(d+1)\times(d+1)$ block takes the form:
    \begin{gather*}
        \begin{bmatrix}
            \frac{6}{d^2} & \frac{2}{d^2} & \ldots & \frac{2}{d^2} & \sqrt{2} \left( \frac{2}{d} + \frac{3}{d^2} \right) \\
            \frac{2}{d^2} & \frac{6}{d^2} & \ldots & \frac{2}{d^2} & \sqrt{2} \left( \frac{2}{d} + \frac{3}{d^2} \right) \\
            \vdots & \vdots & \ddots  & \vdots & \vdots \\
            \frac{2}{d^2} & \frac{2}{d^2} & \ldots & \frac{6}{d^2} & \sqrt{2} \left( \frac{2}{d} + \frac{3}{d^2} \right) \\
            \sqrt{2} \left( \frac{2}{d} + \frac{3}{d^2} \right) & \sqrt{2} \left( \frac{2}{d} + \frac{3}{d^2} \right) & \ldots & \sqrt{2} \left( \frac{2}{d} + \frac{3}{d^2} \right) & \left( 4 + \frac{10}{d} + \frac{3}{d^2} \right)
        \end{bmatrix}
    \end{gather*}
    Exploiting the permutation symmetry of the first $d$ Hermite coefficients in this matrix, we can search for its inverse in the following form:
    \begin{gather*}
        \begin{bmatrix}
            x & y & \ldots & y & z \\
            y & x & \ldots & y & z \\
            \vdots & \vdots & \ddots  & \vdots & \vdots \\
            y & y & \ldots & x & z \\
            z & z & \ldots & z & t
        \end{bmatrix}
    \end{gather*}
    From the condition that the product of the formal matrix with the later one must be $I_{d+1}$, we obtain the following 4 linear equations in 4 unknown variables:
    \begin{gather*}
        \frac{6}{d^2} x + 2 \left(\frac{1}{d} - \frac{1}{d^2}\right) y + \sqrt{2} \left( \frac{2}{d} + \frac{3}{d^2} \right) z = 1 \\
        \frac{2}{d^2} x + 2 \left(\frac{1}{d} + \frac{1}{d^2}\right) y + \sqrt{2} \left( \frac{2}{d} + \frac{3}{d^2} \right) z = 0 \\
        2 \left( \frac{1}{d} + \frac{2}{d^2} \right) z + \sqrt{2} \left( \frac{2}{d} + \frac{3}{d^2} \right) t = 0 \\
        \sqrt{2} \left( 2 + \frac{3}{d} \right) z + \left( 4 + \frac{10}{d} + \frac{3}{d^2} \right) t = 1
    \end{gather*}
    Solving this system, we obtain
    \begin{gather*}
        x = \frac{d^2}{4} + O(d) \\
        y = \frac{d}{12} + O(1) \\
        z = - \frac{\sqrt{2} d}{6} + O(1) \\
        t = \frac{d}{6} + O(1)
    \end{gather*}
    Combining with 
    \begin{gather*}
        \left( \mathbb{E}[\widehat{\phi}(1,1,0,\ldots,0)^2] \right)^{-1} = \frac{d^2}{4} \\
        \left( \mathbb{E}[\widehat{\phi}(1,0,\ldots,0)^2] \right)^{-1} = \frac{d}{4} + O(1)
    \end{gather*}
    we obtain:
    \begin{gather*}
        \widehat{g}^\top \Phi^{-1} \widehat{g}
        \approx \sum_{i=1}^d \widehat{g}(2e_i)^2 \cdot \frac{d^2}{4} + \sum_{i<j} \widehat{g}(e_i+e_j)^2 \cdot \frac{d^2}{4} \\
        +\sum_{i=1}^d \widehat{g}(e_i)^2 \cdot \frac{d}{4} + \widehat{g}(0)^2 \cdot \frac{d}{6}
        + \sum_{i<j} \widehat{g}(2e_i) \widehat{g}(2e_j) \cdot \frac{d}{6} + \sum_{i=1}^d \widehat{g}(2e_i) \widehat{g}(0) \cdot \left(- \frac{\sqrt{2}}{3} d \right)
    \end{gather*}
\end{proof}

\begin{proof}[Proof of Example \ref{example: 1}]
    Assume that $g$ is an interpolator of the training data. We show that
\begin{equation*}
    g(x) = \widehat{g}(0) + \widehat{g}(e_1) x_1 + \widehat{g}(2e_1)\frac{x_1^2-1}{\sqrt{2}} + \sum_{i \geq 2} \left(\widehat{g}(e_i) x_i + \widehat{g}(e_1 + e_i) x_1 x_i \right) \, ,
\end{equation*}
with the constraints that $\widehat{g}(0) + \widehat{g}(e_1) = 1$ and $\widehat{g}(e_i) + \widehat{g}(e_1 + e_i) = 0$ for all $i \geq 2$. 

Recall that the support of this distribution contains a subset of the form $\{1\} \times S_2 \times \dots \times S_d$ where $S_2, \dots, S_d$ have cardinality at least $3$. We apply Theorem 1.1 of \cite{alon1999combinatorial}. There exists multivariate polynomials $h_1, h_2, \dots, h_d$ such that $g = \sum_i h_i g_i$ with $g_1(x) = x_1-1$ and for $i \geq 2$, $g_i(x) = \Pi_{s \in S_i}(x-x_i)$. Moreover, the degree of $h_1$ is at most $\deg f - \deg g_1 = 1$ and the degree of $h_i$ ($i\geq 2$) is at most $\deg f - \deg g_i = -1$. Thus $h_2 = \dots = h_d = 0$. We thus get $g(x) = h_1(x) (x_1 -1)$ with $h_1(x)$ affine, which is equivalent to the above statement. 

To sum up, we minimize 
\begin{align*}
            \widehat{g}^\top \Phi^{-1} \widehat{g}
        &\approx \widehat{g}(2e_1)^2 \cdot \frac{d^2}{4} + \sum_{i \geq 2} \widehat{g}(e_1+e_i)^2 \cdot \frac{d^2}{4}  + \widehat{g}(e_1)^2 \cdot \frac{d}{4}\\
        &\qquad+\sum_{i\geq 2} \widehat{g}(e_i)^2 \cdot \frac{d}{4} + \widehat{g}(0)^2 \cdot \frac{d}{6}
        +  \widehat{g}(2e_1) \widehat{g}(0) \cdot \left(- \frac{\sqrt{2}}{3} d \right)
\end{align*}
in the variables $\widehat{g}(0), \widehat{g}(e_1), \widehat{g}(2e_1), \widehat{g}(e_1 + e_i), \widehat{g}(e_i)$, $i \geq 2$ with constraints $\widehat{g}(0) + \widehat{g}(e_1) = 1$ and $\widehat{g}(e_i) + \widehat{g}(e_1 + e_i) = 0$. 

This optimization problem is separable in the groups of variables $\{\widehat{g}(0), \widehat{g}(e_1), \widehat{g}(2e_1)\}$ and $\{\widehat{g}(e_1 + e_i), \widehat{g}(e_i)\}$ (both the constraints and the objective are separable). The second optimization problem is obvious and leads to the unique solution $\widehat{g}(e_i) = \widehat{g}(e_1 + e_i) = 0$, $i \geq 2$. Simplifying the scaling of the objective, we are left with the optimization problem of minimizing 
\begin{align*}
             \widehat{g}(2e_1)^2 \cdot \frac{d}{4}  + \widehat{g}(e_1)^2 \cdot \frac{1}{4}+ \widehat{g}(0)^2 \cdot \frac{1}{6}
        +  \widehat{g}(2e_1) \widehat{g}(0) \cdot \left(- \frac{\sqrt{2}}{3}  \right)
\end{align*}
under the constraint $\widehat{g}(0) + \widehat{g}(e_1) = 1$. 

Minimizing marginally in $\widehat{g}(2e_1)$, we obtain that $\widehat{g}(2e_1) = \widehat{g}(0) \frac{2\sqrt{2}}{3d}$. Substituting in the expression above, we minimize 
\begin{align*}
              \widehat{g}(e_1)^2 \cdot \frac{1}{4}+ \widehat{g}(0)^2 \cdot \frac{1}{6}
        -   \widehat{g}(0)^2  \frac{4}{9d} 
\end{align*}
under the constraint $\widehat{g}(0) + \widehat{g}(e_1) = 1$. The last term has a negligible effect as $d \to \infty$ and thus the solution converges to the solution with $\widehat{g}(0) = \frac{3}{5}$, $\widehat{g}(e_1) = \frac{2}{5}$.
    Thus, the random feature model learns the function $f_{\textnormal{RF}}(x) = \frac{2}{5} x_1 + \frac{3}{5}$.
\end{proof}

\section{Proof of Theorem \ref{thm:roots-unity}} 
\label{ap:roots-unity}

The proof follows a structure similar to the one of \cite{gotu} (and to the one of Section \ref{sec:main} and Appendix \ref{sec:lemmas}): the strategy is to study the covariance matrix of the random features. The minimum degree bias follows from different scales (in $d$) of different polynomial components of the random features. Here we only outline the main differences with the previous proofs. 

For functions $h:\U_n^d \to \CC$, the appropriate decomposition is given by its discrete Fourier transform. It corresponds to the linear decomposition on the basis of monomials 
\begin{equation*}
    \chi_{j_1, \dots, j_d}(x) = x_1^{j_1} \cdots x_d^{j_d} \, .
\end{equation*}
This basis is orthonormal in the Hermitian space $L^2(\U_n^d, \Unif(\U_n^d))$. More concretely, the discrete Fourier transform of $h:\U_n^d \to \CC$ is $\widehat{h}:\{0, \dots, n-1\}^d \to \CC$, such that 
\begin{gather*}
    \widehat{h}(j_1, \dots, j_d) = \E_{x \sim \Unif(\U_n^d)}\left[h(x) \overline{x}_1^{j_1} \cdots \overline{x}_d^{j_d}\right] \, , \\
    j_1, \dots, j_d \in \{0, \dots, n-1\} \, .
\end{gather*}
The inverse Fourier transform states that 
\begin{equation*}
    h(x) = \sum_{j_1, \dots, j_d \in \{0, \dots, n-1\}}  \widehat{h}(j_1, \dots, j_d) x_1^{j_1} \cdots x_d^{j_d} \, .
\end{equation*}
We consider the discrete Fourier transform of the random feature $\phi_{w,b}(x) = \sigma\left(\langle w, x \rangle + b\right)$: 
\begin{equation*}
    \widehat{\phi}_{w,b}(j) = \E_x\left[{\phi}_{w,b}(x) \overline{x}_1^{j_1} \cdots \overline{x}_d^{j_d}\right] \, .
\end{equation*}
Theorem \ref{thm:roots-unity} stems from the following proposition.

\begin{proposition}
    Consider $j, j' \in \{0, \dots, n-1\}^d$, $j \neq j'$. Then 
    \begin{enumerate}
        \item $\E_{w,b}\left[\widehat{\phi}_{w,b}(j) \overline{\widehat{\phi}_{w,b}(j')}\right] = 0$, and
        \item $\E_{w,b}\left[\left\vert \widehat{\phi}_{w,b}(j) \right\vert^2\right] = \Theta\left(d^{-\vert j \vert}\right)$ as $d \to \infty$. 
    \end{enumerate}
\end{proposition}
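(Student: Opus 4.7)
The plan is to exploit the natural $\U_n^d$-symmetry of the problem. For $\zeta = (\zeta_1,\dots,\zeta_d) \in \U_n^d$, let $D = \diag(\zeta_1,\dots,\zeta_d)$; since $|\zeta_i| = 1$ and each $w_i$ is circularly symmetric, $Dw$ has the same distribution as $w$. Using $\langle Dw, x\rangle = \langle w, \overline{D}x\rangle$ and the change of variable $y = \overline{D}x$, which is a measure-preserving bijection of $\U_n^d$, a direct calculation yields $\widehat{\phi}_{Dw,b}(j) = \overline{\zeta}_1^{j_1}\cdots\overline{\zeta}_d^{j_d}\,\widehat{\phi}_{w,b}(j)$. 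Applying this to both factors of the covariance gives
\begin{equation*}
    \E\bigl[\widehat{\phi}_{w,b}(j)\,\overline{\widehat{\phi}_{w,b}(j')}\bigr] = \zeta_1^{j_1' - j_1}\cdots\zeta_d^{j_d' - j_d}\,\E\bigl[\widehat{\phi}_{w,b}(j)\,\overline{\widehat{\phi}_{w,b}(j')}\bigr].
\end{equation*}
When $j \neq j'$ one picks a coordinate $i$ with $j_i \neq j_i'$ in $\{0,\dots,n-1\}$ and sets $\zeta_i = e^{2\pi i/n}$, $\zeta_k = 1$ otherwise; the scalar prefactor becomes $e^{2\pi i(j_i' - j_i)/n} \neq 1$, forcing the covariance to vanish.

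\textbf{Part 2 (scaling).} The plan is a direct Fourier calculation. Under the regularity assumption on $\sigma$ (treating it as entire for convenience), Taylor expand around $b$ and apply the multinomial formula,
\begin{equation*}
    \phi_{w,b}(x) = \sum_{k\geq 0}\sum_{|T|=k}\frac{\sigma^{(k)}(b)}{k!}\binom{k}{T}\,\overline{w}^T x^T.
\end{equation*}
Taking the Fourier transform in $x$ and using $\E_{x \sim \Unif(\U_n^d)}[x^T \overline{x}^j] = \bfone\{t_i \equiv j_i \bmod n \text{ for all } i\}$, only indices $T$ with $T \equiv j$ componentwise modulo $n$ survive, so
\begin{equation*}
    \widehat{\phi}_{w,b}(j) = \sum_{\substack{T\in\NN^d \\ T \equiv j \bmod n}}\frac{\sigma^{(|T|)}(b)}{|T|!}\binom{|T|}{T}\,\overline{w}^T.
\end{equation*}
Then $\E_{w,b}|\widehat{\phi}_{w,b}(j)|^2$ collapses to a diagonal sum via $\E_w[\overline{w}^T w^{T'}] = 0$ for $T \neq T'$ (again by coordinatewise circular symmetry) and $\E|w_i|^{2t} = 2^t t!/d^t$, producing
\begin{equation*}
    \E_{w,b}|\widehat{\phi}_{w,b}(j)|^2 = \sum_{T \equiv j \bmod n} \frac{2^{|T|}\,\E_b|\sigma^{(|T|)}(b)|^2}{\prod_i t_i!}\,d^{-|T|}.
\end{equation*}
The minimal admissible $T$ is $T = j$ itself, contributing at order $d^{-|j|}$, while every other admissible $T$ has $|T| \geq |j|+n$ and contributes $O(d^{-|j|-n})$. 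Hence $\E_{w,b}|\widehat{\phi}_{w,b}(j)|^2 = \Theta(d^{-|j|})$, provided the leading coefficient $\E_b|\sigma^{(|j|)}(b)|^2$ is nonzero, the natural non-degeneracy analogue of Assumption~\ref{assumption: 3} for complex activations, automatic as soon as $\sigma^{(|j|)}$ is not identically zero since $b$ has full support in $\CC$.

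\textbf{Main obstacle.} The symmetry step in Part~1 is clean and essentially combinatorial. The real technical work lies in Part~2, in justifying the interchange of the infinite Taylor sum with the expectations over $w$ and $b$, both of which have unbounded support. This will require growth/integrability hypotheses on $\sigma$ and its derivatives (e.g.\ $\sigma$ entire of controlled order, or uniform $L^2$ bounds on $\sigma^{(k)}(b)$ under the Gaussian measure) ensuring term-by-term convergence. With the proposition in hand, Theorem~\ref{thm:roots-unity} then follows by the standard covariance-matrix / minimum-norm argument analogous to Lemma~\ref{lemma: 5} and Corollary~\ref{corollary: 2}: the proposition shows that the covariance matrix of $(\widehat{\phi}_{w,b}(j))_j$ is diagonal with entry $j$ of order $d^{-|j|}$, so coefficients of higher-degree monomials in the minimum-norm interpolator are penalized by a factor growing polynomially in $d$, driving them to zero in the limit.
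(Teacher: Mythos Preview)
Your Part~1 is the paper's argument verbatim: the paper rotates a single coordinate of $w$ by $e^{2\pi i/n}$ and uses distributional invariance, which is exactly your specialization of $D=\diag(\zeta)$ to a single nontrivial entry.

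In Part~2 the approaches diverge slightly. The paper Taylor-expands $\sigma$ around $0$, keeps the bias $b$ inside each power $(\langle w,x\rangle+b)^k$, and handles the three cases $k<|j|$, $k=|j|$, $k>|j|$ separately. You instead expand around $b$, so only pure monomials $\overline w^T x^T$ appear; the Fourier transform then selects $T\equiv j\pmod n$ componentwise, and a second use of circular symmetry of each $w_i$ diagonalizes $\E|\widehat\phi|^2$ exactly over $T$. Your route is a bit cleaner: it yields a closed-form series for the variance and makes the non-degeneracy requirement $\E_b\lvert\sigma^{(|j|)}(b)\rvert^2>0$ explicit, whereas the paper dispatches the $k>|j|$ contribution in one line and leaves the needed condition on $\sigma$ implicit. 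One small point to tighten: there are $\Theta(d)$ admissible indices with $|T|=|j|+n$ (you may add $n$ to any of the $d$ coordinates of $j$), so the total correction at that level is $O(d^{-|j|-n+1})$, not $O(d^{-|j|-n})$; since $n\geq 2$ this is still $o(d^{-|j|})$, and more generally level $|j|+mn$ has $O(d^m)$ terms contributing $O(d^{-|j|-m(n-1)})$, which is summable in $m$. Both your proof and the paper's carry the integrability caveat you already flag.
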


The two points of this proposition correpond respectively to the points A4 and A3 in Lemma A.1 of \cite{gotu}.

\begin{proof}
        \begin{enumerate}
            \item From $j \neq j'$, we know that there exists $l \in \{1, \dots, d \}$ such that $j_l \neq j'_l$. Let $R$ denote the rotation of the $l$-th root of unity 
            \begin{gather*}
                T: (x_1, \dots, x_d) \in \U_n^d \mapsto \\ (x_1, \dots, x_{l-1}, e^{i\frac{2\pi}{l}} x_l, x_{l+1}, \dots, x_n) \, .
            \end{gather*}
            We compute the effect of a rotation of $w$ on the discrete Fourier transform of a random feature:
            \begin{align*}
                \widehat{\phi}_{Tw,b}(j) = \E_x\left[{\phi}_{Tw,b}(x) \overline{x}_1^{j_1} \cdots \overline{x}_d^{j_d}\right] \, .
            \end{align*}
            Here we note that the uniform distribution of $\U_n^d$ is invariant under the map $T$, thus 
                        \begin{align*}
                \widehat{\phi}_{Tw,b}(j) = \E_x\left[{\phi}_{Tw,b}(Tx) \overline{(Tx)}_1^{j_1} \cdots \overline{(Tx)}_d^{j_d}\right] \, .
            \end{align*}
            Moreover, 
            \begin{align*}
                \langle Tw, Tx \rangle & = \overline{w}_1 x_1 + \dots + \overline{w}_{l-1} x_{l-1} \overline{e^{i\frac{2\pi}{n}} w_l} e^{i\frac{2\pi}{n}} x_l \\ 
                & + \overline{w}_{l+1} x_{l+1} + \dots + \overline{w}_d w_d = \langle w, x \rangle
            \end{align*}
            and thus ${\phi}_{Tw,b}(Tx) = {\phi}_{w,b}(x)$. 
        As a consequence, we have 
        \begin{align*}
                \widehat{\phi}_{Tw,b}(j) & = \E_x\left[{\phi}_{w,b}(x) \overline{(Tx)}_1^{j_1} \cdots \overline{(Tx)}_d^{j_d}\right] \\
                & = e^{-i \frac{2\pi}{n}} \E_x\left[{\phi}_{w,b}(x) \overline{x}_1^{j_1} \cdots \overline{x}_d^{j_d}\right]  \\
                & = e^{-i \frac{2\pi j_l}{n}} \widehat{\phi}_{w,b}(j) \, .
            \end{align*}
            We are ready to conclude. The distribution of $w$ is invariant under the map $T$, thus 
            \begin{gather*}
                \E_{w,b}\left[\widehat{\phi}_{w,b}(j) \overline{\widehat{\phi}_{w,b}(j')}\right] = \E_{w,b}\left[\widehat{\phi}_{Tw,b}(j) \overline{\widehat{\phi}_{Tw,b}(j')}\right] \\
                = e^{i\frac{2\pi(j_l'-j_l)}{n}} \E_{w,b}\left[\widehat{\phi}_{w,b}(j) \overline{\widehat{\phi}_{w,b}(j')}\right] \, .
            \end{gather*}
            As $j_l \neq j_l'$ and $j_l, j_l' \in \{0, \dots, n-1\}$, we have $e^{i\frac{2\pi(j_l'-j_l)}{n}} \neq 1$. Thus it must be that $\E_{w,b}\left[\widehat{\phi}_{w,b}(j) \overline{\widehat{\phi}_{w,b}(j')}\right] = 0$.
            \item We make a Taylor expansion of $\sigma$ at $0$:
            \begin{align*}
                \widehat{\phi}_{w,b}(j) = \E_x\left[\sigma(\langle w,x\rangle + b) \overline{\chi_j(x)} \right] = \sum_{k=0}^\infty \frac{\sigma^{(k)}(0)}{k!} \E_x\left[(\langle w,x\rangle + b)^k \overline{\chi_j(x)} \right] \, .
            \end{align*}
            We make three cases depending on the index $k$ of the sum:
            \begin{itemize}
                \item If $k < |j|$, then $(\langle w,x\rangle + b)^k$ is a polynomial of degree $<k\leq |j| = \deg \chi_j$ thus by orthogonality $\E_x\left[(\langle w,x\rangle + b)^k \overline{\chi_j(x)} \right] =0$.
                \item If $k = |j|$, then 
                \begin{align*}
                    \E_x\left[(\langle w,x\rangle + b)^k \overline{\chi_j(x)} \right] &= \E_x\left[(\overline{w}_1 x_1 + \dots + \overline{w}_d x_d + b)^k \overline{\chi_j(x)} \right] \\
                    &= \sum_{l_1 + \dots + l_d + l_{d+1} = k} {k \choose l_1, \dots, l_d, l_{d+1}} \E_x\left[(\overline{w}_1 x_1)^{l_1} \cdots (\overline{w}_d x_d)^{l_d} b^{l_{d+1}}\overline{\chi_j(x)} \right] \\
                    &= \sum_{l_1 + \dots + l_d + l_{d+1} = k} {k \choose l_1, \dots, l_d, l_{d+1}} \overline{w}_1^{l_1} \cdots \overline{w}_d^{l_d}  b^{l_{d+1}}\E_x\left[x_1^{l_1-j_1} \cdots x_d^{l_d-j_d}  \right] \, .
                \end{align*}
                Note that $\E_x\left[x_1^{l_1-j_1} \cdots x_d^{l_d-j_d}  \right]$ equals $1$ if $l_1 \equiv j_1 \mod n, \dots, l_d \equiv j_d \mod n$ and $0$ otherwise. As $l_1 + \dots + l_d = k-l_{d+1} \leq k = j_1 + \dots + j_d$, this is possible if and only if $l_1 = j_1$, \dots, $l_d = j_d$, $l_{d+1} = 0$. Thus 
            \begin{align*}
                    \E_x\left[(\langle w,x\rangle + b)^k \overline{\chi_j(x)} \right] &= {k \choose j_1, \dots, j_d} \overline{w}_1^{j_1} \cdots \overline{w}_d^{j_d}  = \frac{1}{d^{|j|/2}} {k \choose j_1, \dots, j_d}\overline{u}_1^{j_1} \cdots \overline{u}_d^{j_d}
                \end{align*}
                where $u := d^{1/2} w$ (and thus $u_1, \dots, u_d$ are i.i.d. with standard Gaussian distribution in the complex plane). 
                \item If $k > |j|$, then 
                \begin{align*}
                     \E_x\left[(\langle w,x\rangle + b)^k \overline{\chi_j(x)} \right] = \frac{1}{d^{k/2}} \E_x\left[(\langle u,x\rangle + b)^k \overline{\chi_j(x)} \right] \, ,
                \end{align*}
                where again $u = d^{1/2} w$ and $c := d^{1/2} b$ (and thus $b$ has standard Gaussian distribution in the complex plane).
                \end{itemize}
                            Putting these three points together, we obtain 
                \begin{align*}
                \widehat{\phi}_{w,b}(j) = \frac{1}{d^{|j|/2}} {k \choose j_1, \dots, j_d}\overline{u}_1^{j_1} \cdots \overline{u}_d^{j_d} + o\left(\frac{1}{d^{|j|/2}}\right)  \, .
            \end{align*}
            Thus 
            \begin{align*}
                \E_{w,b} \left[\left\vert \widehat{\phi}_{w,b}(j) \right\vert^2\right] = \frac{1}{d^{|j|}} {k \choose j_1, \dots, j_d}^2 \E_w \left[\vert {u}_1 \vert^{2j_1} \cdots \vert {u}_d \vert^{2j_d} \right] + o\left(\frac{1}{d^{|j|}}\right) = \Theta\left(\frac{1}{d^{|j|}}\right) \, .
            \end{align*}
            \end{enumerate}
    \end{proof}

%%%%%%%%%%%%%%%%%%%%%%%%%%%%%%%%%%%%%%%%%%%%%%%%%%%%%%%%%%%%%%%%%%%%%%%%%%%%%%%
%%%%%%%%%%%%%%%%%%%%%%%%%%%%%%%%%%%%%%%%%%%%%%%%%%%%%%%%%%%%%%%%%%%%%%%%%%%%%%%

\end{document}